\newcommand{\cB}{\mathcal{B}}
\newcommand{\cC}{\mathcal{C}}
\newcommand{\cD}{\mathcal{D}}
\newcommand{\cN}{\mathcal{N}}
\newcommand{\cO}{\mathcal{O}}
\newcommand{\cU}{\mathcal{U}}
\newcommand{\cX}{\mathcal{X}}
\newcommand{\cZ}{\mathcal{Z}}
\newcommand\bR{\mathbb{R}}
\newcommand\bN{\mathbb{N}}
\newcommand\bE{\mathbb{E}}
\newcommand\Prob{\mathbb{P}}
\newcommand{\Id}{\mathds{1}}
\newcommand{\abs}[1]{\lvert#1\rvert}
\newtheorem{thm}{Theorem}
\newtheorem{cor}{Corollary}
\newtheorem{lem}{Lemma}
\theoremstyle{plain}
\def\BibTeX{{\rm B\kern-.05em{\sc i\kern-.025em b}\kern-.08em
    T\kern-.1667em\lower.7ex\hbox{E}\kern-.125emX}}
\begin{document}

\title{RAB: Provable Robustness Against Backdoor Attacks\vspace{-1.5em}}
\author{
Maurice Weber$^{\dagger}$
$^*$~~~~~Xiaojun Xu$^{\ddagger}$ $^*$~~~~~Bojan Karla\v{s}$^\dagger$~~~~~Ce Zhang$^\dagger$~~~~~Bo Li$^\ddagger$\\
\small $^\dagger$ ETH Zurich, Switzerland~~~~\{maurice.weber, karlasb, ce.zhang\}@inf.ethz.ch\\
\small $^\ddagger$ University of Illinois at Urbana-Champaign, USA~~~~\{xiaojun3, lbo\}@illinois.edu
}

\maketitle

\begin{abstract}
Recent studies have shown that deep neural networks (DNNs) are vulnerable to adversarial attacks, including evasion and backdoor (poisoning) attacks.
On the defense side, there have been intensive efforts on improving both empirical and provable robustness against evasion attacks; however, the provable robustness against backdoor attacks still remains largely unexplored.
In this paper, we focus on certifying the machine learning model robustness against general threat models, especially backdoor attacks.
We first provide a unified framework via randomized smoothing techniques and show how it can be instantiated to certify the robustness against both evasion and backdoor attacks. We then propose the \textit{first} robust training process,  RAB, to smooth the trained model and certify its robustness against backdoor attacks. We prove the robustness bound for machine learning models trained with RAB and prove that our robustness bound is tight.
In addition, we theoretically show that it is possible to train the robust smoothed models efficiently for simple models such as K-nearest neighbor classifiers, and we propose an exact smooth-training algorithm that eliminates the need to sample from a noise distribution for such models.
Empirically, we conduct comprehensive experiments for different machine learning (ML) models such as DNNs, support vector machines,
and K-NN models on MNIST, CIFAR-10, and ImageNette datasets and provide the first benchmark for certified robustness against backdoor attacks.
In addition, we evaluate K-NN models on a spambase tabular dataset to demonstrate the advantages of the proposed exact algorithm.
Both the theoretic analysis and the comprehensive evaluation on diverse ML models and datasets shed light on further robust learning strategies against general training time attacks.
\end{abstract}

\section{Introduction}

Building machine learning algorithms that
are robust to adversarial attacks has been
an emerging topic over the last decade.
There are mainly two different types of
adversarial attacks: (1) \textit{evasion attacks},
in which the attackers manipulate the
test examples against a trained machine learning (ML) model, and (2) \textit{data poisoning attacks},
in which the attackers are allowed to perturb the
training set. Both types of
attacks have attracted
intensive interests from
academia as well as industry~\cite{goodfellow2014explaining,xiao2018generating,liao2018backdoor,yang2017generative}.

In response, several empirical solutions have been proposed as defenses against evasion attacks~\cite{carlini2017adversarial,xu2017feature,ma2018characterizing,yang2018characterizing}.
For instance, adversarial training has been proposed to retrain the ML models with generated adversarial examples~\cite{madry2017towards}; quantization has been applied to either inputs or neural network weights to defend against potential adversarial instances~\cite{xu2017feature}.
However, recent studies have shown that these defenses are not resilient against intelligent adversaries responding dynamically to the deployed defenses~\cite{carlini2017adversarial,athalye2018obfuscated}.

As a result, one recent, exciting line of research aims to develop \textit{certifiably robust} algorithms against {\em evasion attacks}, including both deterministic and probabilistic certification approaches~\cite{li2020sok}.
In particular, among these certified robustness approaches, only randomized smoothing and its variations are able to provide certified robustness against evasion attacks on large-scale datasets such as ImageNet~\cite{lecuyer2019,cohen2019certified,yang2020}.
Intuitively, the randomized smoothing-based approaches are able to certify the robustness of a smoothed classifier, by outputting a consistent prediction for an adversarial input as long as the perturbation is within a certain radius.
The smoothed classifier is obtained by taking the expectation over the possible outputs given a set of randomized inputs which are generated by adding noise drawn from a certain distribution.

\begin{figure}[t!]
\centering
\includegraphics[width=0.49\textwidth]{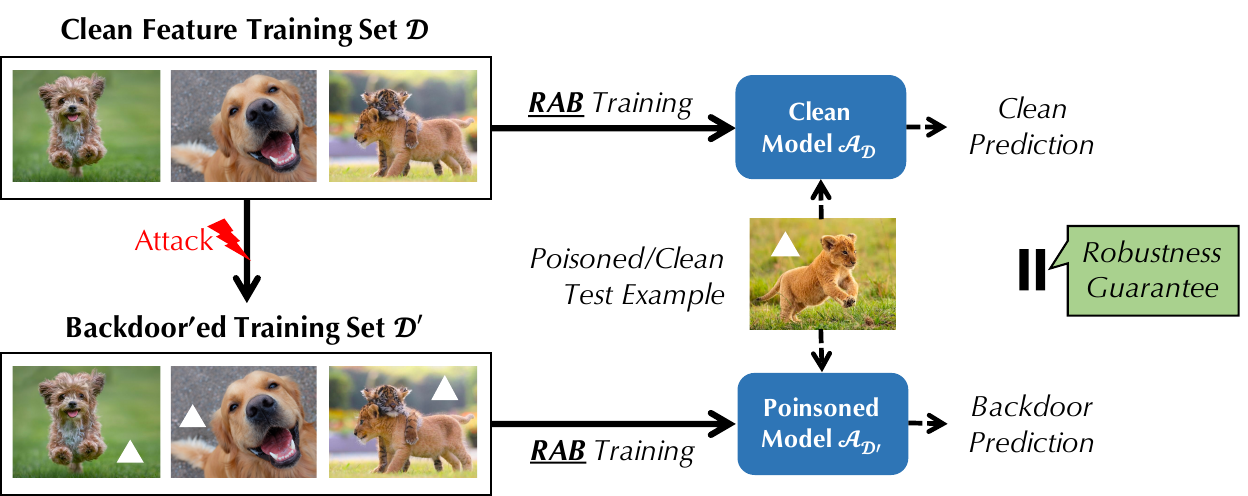}
\caption{In this paper, we define a robust training process RAB against backdoor attacks.
Given a poisoned dataset $\cD'$ ---
produced by adding backdoor patterns
$\Delta$
to some instances in
the dataset $\cD$ with  clean
features  ---
this robust training process guarantees that,
for all
test examples $x$, $\mathcal{A}_{\mathcal{D}'}(x) = \mathcal{A}_{\mathcal{D}}(x)$, with
high probability when the magnitude of the backdoor pattern $\Delta$ is
within the certification radius.}
\label{fig:framework}
\end{figure}

Despite these recent developments on certified robustness against \textit{evasion attacks}, only empirical studies have been conducted to defend against \textit{backdoor attacks}~\cite{wang2019neural,gao2019strip,gu2017badnets,li2016data}, and the question of how to improve and certify the robustness of given machine learning models against backdoor attacks remains largely unanswered.
To the best of our knowledge, there is no certifiably robust strategy
to deal with backdoor attacks yet. Naturally,
we ask: {\em Can we develop certifiably
robust ML models against backdoor attacks?}

It is clear that extending existing certification methods against evasion attacks to certifying training-time attacks is challenging given these two significantly different threat models.
For instance, even certifying a label flipping training-time attack is non-trivial as illustrated in a concurrent work~\cite{rosenfeld2020certified}, which proposes to certify against a label flipping attack by setting a limit to how many labels in the training set may be flipped such that it does not affect the final prediction leveraging randomized smoothing.
As backdoor attacks involve both label flipping and instance pattern manipulations, providing certifications can be even more challenging.

In particular, to carry out a backdoor attack,
an attacker adds small backdoor patterns to a subset of training instances such that
the trained model is biased toward test images with
the same patterns~\cite{gu2019badnets,chen2017targeted}.
Such attacks can be applied to various real-world scenarios such as online face recognition systems~\cite{li2016data,chen2017targeted}.
In this paper, we present the first certification process, referred to as RAB, which offers provable robustness for ML models against backdoor attacks.
As shown in Figure~\ref{fig:framework}, our \textbf{certification goal} is to guarantee \textit{that a test instance, which may contain backdoor patterns, will be classified the same, independent of whether the models were trained on data with or without backdoors,
as long as the embedded backdoor patterns are within an $L_p$-ball of radius $R$.}
We formally define
the corresponding threat model and
our certification goal
in Section~\ref{s:threat}.

Our approach to achieving this is mainly inspired
by randomized
smoothing, a technique to certify robustness against evasion
attacks~\cite{cohen2019certified}, but goes
significantly beyond it due to the different settings (e.g. evasion and backdoor attacks). Our \textbf{first
step/contribution} is to develop
a general theoretical framework to generalize randomized smoothing
to a much larger family of
functions and smoothing distributions.
This allows us to support cases
in which a classifier is a function that
takes as input a test instance \emph{and} a training set.
With our framework, we can \emph{(1) provide robustness certificates against both evasion and dataset poisoning attacks;} \emph{(2) certify any classifier which takes as input a tuple of test instance and training dataset} and \emph{(3) prove that the derived robustness bound is tight}.
Given this general framework,
we can enable a
basic version of the proposed RAB framework.
At a high level, as shown in Figure~\ref{fig:process}, given training set $\cD$, RAB generates $N$ additional ``smoothed" training sets
$\cD+\epsilon_i$ by adding noise $\epsilon_i$ ($i \in \{1,\ldots,N\}$) drawn from a certain smoothing distribution and, for each of these $N$ training sets, a corresponding classifier is trained resulting in an ensemble of $N$ different classifiers.
These models are then aggregated to generate a ``smoothed classifier" for which we prove that its output will be consistent regardless of whether
there are backdoors added during training, as long as the backdoor patterns satisfy certain conditions.

However, this basic version
is not enough
to provide
satisfactory certified robustness against backdoor attacks.
When we instantiate
our theoretical framework with a practical training pipeline to provide certified robustness against backdoor attacks,
we need to further develop nontrivial techniques
to improve two aspects: (1) Certification Radius and
(2) Certification Efficiency.
Our \textbf{second
step/contribution} are two
non-trivial technical
optimizations.
(1) To improve the \textit{certification radius},
we certify DNN classifiers with
a data augmentation
the scheme enabled by hash functions and, in the meantime,
explore
different design decisions
such as the smoothness of the training process.
This provides additional guidance for improving the certified robustness against backdoor attacks and we hope that it can inspire
other researches in the future.
(2) To improve the \textit{certification
efficiency}, we
observed that for certain families
of classifiers, namely $K$-nearest
neighbor classifiers,
we can develop an efficient algorithm
to compute the smoothing
result \textit{exactly, eliminating the need to resort to Monte Carlo algorithms
as for generic classifiers}.

Our \textbf{third contribution} is an extensive
benchmark, evaluating our framework RAB
on multiple
machine learning models and provide
the first collection of certified robustness bounds
on a diverse range of datasets, namely MNIST, CIFAR-10, ImageNette, as well as spambase tabular data.
We hope that these experiments and benchmarks can provide future directions for improving the robustness of ML models against backdoors.

Being the first result
on certified robustness against backdoor attacks, we believe that these results can be further improved by future research endeavours inspired by this work. We make the code and evaluation protocol
publicly available with the hope to facilitate future
research by the community.

\vspace{0.5em}
\noindent{\bf Summary of Technical Contributions.}
Our technical contributions are as follows:
\begin{itemize}[leftmargin=*]
    \item We propose a unified framework to certify the model robustness against both evasion and backdoor attacks and prove that our robustness bound is tight.
    \item We provide the first certifiable robustness bound for general machine learning models against backdoor attacks considering \textit{different} smoothing noise distributions.
    \item We propose an \textit{exact} {efficient} smoothing algorithm for $K$-NN models without needing to sample random noise during training.
    \item We conduct extensive reproducible large-scale experiments and provide a benchmark for certified robustness against three representative backdoor attacks for multiple types of models (e.g., DNNs, support vector machines, and $K$-NN) on diverse datasets. We also provide a series of ablation studies to further analyze the factors that affect model robustness against backdoors.
\end{itemize}

\vspace{0.5em}
\noindent{\bf Outline.}
The remainder of this paper is organized as follows. Section~\ref{s:back} provides background on backdoor attacks and related verifiable robustness techniques, followed by the threat model and method overview in Section~\ref{s:threat}. Section~\ref{sec:framework} presents the proposed general theoretical framework for certifying robustness against evasion and poisoning attacks, the tightness of the derived robustness bound, and sheds light on a connection between statistical hypothesis testing and certifiable robustness. Section~\ref{sec:backdoor_attacks} explains in detail the proposed approach RAB for certifying robustness against backdoor attacks under the general framework with Gaussian and uniform noise distributions. Section~\ref{sec:models} analyzes the robustness properties of DNNs and $K$-NN classifiers and presents algorithms to certify robustness for such models (mainly with binary classifiers). Experimental results are presented in section~\ref{sec:experiments}. Finally, Section~\ref{sec:related_work} puts our results in context with existing work, Section~\ref{sec:limitations} discusses the limitations of our work,
and Section~\ref{sec:conclusions} concludes.

\section{Background}
\label{s:back}

In this section, we provide an overview of different backdoor attacks and briefly review the randomized smoothing technique for certifying robustness against evasion attacks.

\subsection{Backdoor attacks}

A backdoor attack
aims to inject certain ``backdoor" patterns into the training set and associate such patterns with a specific adversarial target (label). As a result, during testing time, any test instance with such a pattern will be misclassified as the preselected adversarial target~\cite{gu2017badnets,chen2017targeted}.
ML models with injected backdoors are called \emph{backdoored models} and they are typically able to achieve performance similar to clean models on benign data, making it challenging to detect whether the model has been backdoored.

There are several ways to categorize backdoor attacks. First, based on the \emph{adversarial target design}, the attacks can be characterized either as \emph{single target attacks} or \emph{all-to-all attacks}.
In a \emph{single target attack}, the backdoor pattern will cause the poisoned classifier to always return a designed target label. An \emph{all-to-all} attack leverages the backdoor pattern to permute the classifier results.

The second categorization is based on \emph{different types of backdoor patterns}. There are \emph{region based} and \emph{blending} backdoor attacks.
In the \emph{region based} attack, a specific region of the training instance is manipulated in a subtle way that will not cause human notification~\cite{gu2017badnets,liao2018backdoor}.
In particular, it has been shown that such backdoor patterns can be as small as only one or four pixels~\cite{tran2018spectral}.
On the other hand, Chen et al.~\cite{chen2017targeted} shows that by blending the whole instance with a certain pattern such as a fixed random noise pattern, it is possible to generate effective backdoors to poison the ML models.

In this work, we focus on certifying the robustness against general backdoor attacks, where the attacker is able to add any specific or uncontrollable random backdoor patterns for arbitrary adversarial targets.

\subsection{Randomized smoothing}

To defend against \textit{evasion attacks}, different approaches have been studied: some provide empirical approaches such as adversarial training~\cite{liu2018towards,cao2017mitigating}, and some provide theoretical guarantees against $L_p$ bounded adversarial perturbations. In particular, Cohen et al.~\cite{cohen2019certified} have proposed \textit{randomized smoothing} to certify the robustness of ML models against the $L_2$ norm bounded evasion attacks.

On a high level, the randomized smoothing technique~\cite{cohen2019certified} provides a way to certify the robustness of a \emph{smoothed} classifier against adversarial examples during test time.
First, a given classifier is smoothed by adding Gaussian noise $\epsilon \sim \mathcal{N}(0,\sigma^2\Id_d)$ around each test instance. Then, the classification gap between a lower bound of the confidence on the top-1 class $p_A$ and an upper bound of the confidence on the top-2 class $p_B$ are obtained. The smoothed classifier will be guaranteed to provide consistent predictions within the perturbation radius, which is a function of the standard deviation $\sigma$ of the smoothing noise, and the gap between the class probabilities $p_A$ and $p_B$, for each test instance.

However, all these approaches focus on the robustness against \textit{evasion attacks} only. In contrast, in this work, we aim to provide a function smoothing framework to certify the robustness against both evasion and poisoning attacks.
In particular, the current randomized smoothing strategy focuses on adding noise to induce smoothness on the level of {\em test instance}, while our unified framework generalizes this to smoothing on
the level of \textit{classifiers}.
Putting this generalization into practice in the context of certifying robustness against backdoor attacks naturally bears additional challenges which we describe and address in detail. In addition, we provide theoretical robustness guarantees for different machine learning models, smoothing noise distributions, as well as the tightness of the robustness bounds.

\begin{figure}
\centering
\includegraphics[width=0.49\textwidth]{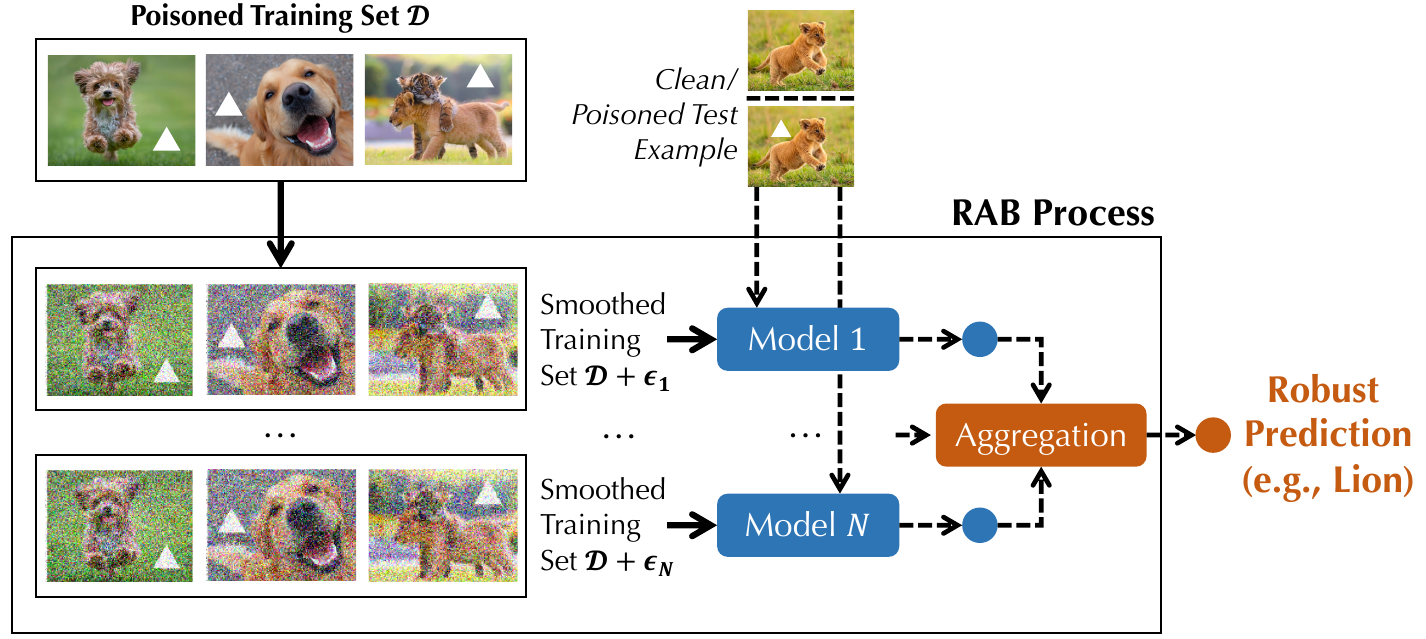}
\caption{An illustration of the RAB robust training process.
Given a poisoned training set $\cD + \Delta$ and a
training process $\mathcal{A}$ vulnerable to backdoor attacks, RAB
generates $N$ smoothed training sets $\{\cD_i\}_{i\in[N]}$
and trains $N$ different classifiers $\mathcal{A}_{i}$.}
\label{fig:process}
\end{figure}

\section{Threat Model and Method Overview}
\label{s:threat}

Here we first define the threat model including concise definitions of a backdoor attack,
and then introduce the method overview, where we define our robustness guarantee.

\subsection{Notation}

We write random variables as uppercase letters $X$ and use the notation $\Prob_X$ to denote the probability measure induced by $X$ and write $f_X$ to denote the probability density function. Realizations of random variables are written in lowercase letters.
For discrete random variables, we use lowercase letters to denote their probability mass function, e.g. $p(y)$ for distribution over labels.
Feature vectors are taken to be $d$-dimensional real vectors $x\in\bR^d$ and the set of labels $y$ for a $C$-multiclass classification problem is given by $\cC=\{1,\,\ldots,\,C\}$. A training set $\cD$ consists of $n$ (feature, label)-pairs $\cD = \{(x_1,\,y_1),\,\ldots,\,(x_n,\,y_n)\}$.
For a dataset $\cD$ and a collection of $n$ feature vectors $d = \{d_1,\,\ldots,\,d_n\}$, we write $\cD + d$ to denote the set $\{(x_1 + d_1,\,y_1),\,\ldots,\,(x_n + d_n,\,y_n)\}$.
We view a classifier as a deterministic function that takes as input a tuple with a test instance $x$ and training set $\cD$ and returns a class label $y\in\cC$. Formally, given a dataset $\cD$ and a test instance $x$, a classifier $h$ learns a conditional probability distribution $p(y\lvert\,x,\,\cD)$ over class labels and outputs the label which is deemed most likely under the learned distribution $p$:
\begin{equation}
    h(x,\,\cD) = \arg\max_y p(y\lvert\,x,\,\cD).
\end{equation}
We omit the dependence on model parameters throughout this paper and tacitly assume that the model is optimized based on training dataset $\cD$ via some optimization schemes such as stochastic gradient descent.

\subsection{Threat Model and the Goal of Defense}

\subsubsection{Threat Model} An adversary carries out a backdoor attack against a classifier $h$ and
a clean dataset $\mathcal{D}=\{(x_i, y_i)\}$.
The attacker has in mind a
target backdoor pattern $\Omega_x$ and
a target class $\tilde{y}$ and
the adversarial goal is to alter the dataset
such that, given a clean test example
$x$, adding the backdoor pattern to $x$ (i.e., $x + \Omega_x$)
will \textit{alter} the classifier output
$\tilde{y}$ with high probability.
In general, the attack can replace $r$ training instances $(x_i,\,y_i)$ by backdoored instances $(x_i+\Omega_x,\,\tilde{y}_i)$.
We remark that the attacker could embed
distinct patterns to each instance and our result
naturally extends to this case. Thus, summarizing the backdoor patterns as the collection $\Delta(\Omega_x) := \{\delta_1,\,\ldots,\,\delta_r,\,0,\ldots,0\}$, we formalize
a backdoor attack as the transformation $(\cD, \Omega_x, \tilde{y}) \to \cD_{BD}(\Omega_x, \tilde{y})$ with
\begin{equation}
    \begin{aligned}
        \cD_{BD}(\Omega_x, \tilde{y})
        =\{(x_i + \delta_i,\tilde{y}_i)\}_{i=1}^r \cup \{(x_i,y_i)\}_{i=r+1}^n
    \end{aligned}
\end{equation}
We often write $\cD_{BD}(\Omega_x)$
instead of $\cD_{BD}(\Omega_x, \tilde{y})$
when our focus is on the backdoor pattern
$\Omega_x$ instead of the target class
$\tilde{y}$.
The backdoor attack succeeds on
test example $x$ whenever
\begin{equation}
    h(x+\Omega_x, \cD_{BD}(\Omega_x)) = \tilde{y}
\end{equation}
\subsubsection{Goal of Defense}
\label{sssec:def-setting}
One natural goal to defend against
the above backdoor attack is to ensure that
the prediction of
$h(x+\Omega_x, \cD_{BD}(\Omega_x))$
is \textit{independent} of the backdoor patterns
$\Delta(\Omega_x)$ which are present in the dataset, i.e.,
\begin{equation}
    h(x+\Omega_x, \cD_{BD}(\Omega_x)) = h(x+\Omega_x, \cD_{BD}(\emptyset))
\end{equation}
where $\cD_{BD}(\emptyset)$
is the dataset without any embedded backdoor patterns
($\delta_i = 0$).
When this is true, the attacker
obtained \textit{no additional information}
by knowing the pattern $\Omega_x$
embedded in the training set. That is to say, given a test instance which may contain a backdoor pattern, its prediction stays the same, independent of whether the models were trained with or without backdoors. We assume that the defender has full control of the training process. See Section~\ref{sec:limitations} for more discussions on the assumptions and limitations of RAB.

\subsection{Method Overview}
\label{sec:method-overview}

\subsubsection{Certified Robustness against Backdoor Attacks}
We aim to obtain robustness bound $R$ such that, whenever the sum of the magnitude of backdoors is below $R$, the prediction of the backdoored classifier is the same as when the classifier is trained on benign data.
Formally, if $\cD_{BD}(\Omega_x)$ denotes the backdoored training set, and $\cD$ the training set containing clean features, we say that a classifier is \textit{provably robust} whenever
$\sqrt{\sum_{i=1}^r\left\|\delta_i\right\|_2^2} < R$
implies that $h(x+\Omega_x,\,\cD_{BD}(\Omega_x)) = h(x+\Omega_x,\,\cD_{BD}(\emptyset))$.

Our approach to obtaining the aforementioned robustness guarantee is based on randomized smoothing, which leads to the robust RAB training pipeline, as is illustrated in Figure~\ref{fig:process}.
Given a clean dataset $\cD$ and a backdoored dataset $\cD_{BD}(\Omega_x)$, the goal of the defender is to make sure that the prediction on test instances embedded with the pattern $\Omega_x$ is the same
as for models trained with
$\cD_{BD}(\emptyset)$.

Different from randomized smoothing-based certification against evasion attacks, here it is not enough to only smooth the test instances. Instead, in RAB, we will first add noise vectors, sampled from a smoothing distribution, to the given training instances, to obtain a collection of ``smoothed" training sets. We subsequently train a model on each training set and aggregate their final outputs together as the final ``smoothed" prediction. After this process, we show that it is possible to leverage the Neyman Pearson lemma to derive a robustness condition for this smoothed RAB training process. Additionally, the connection with the Neyman Pearson lemma also allows us to prove that the robustness bound is tight.
Note that the RAB framework requires the training instances to be ``smoothed" by a set of independent noises drawn from a certain distribution.

\vspace{0.5em}
\noindent{\bf Additional Challenges.}
We remark that, within this RAB training and certification process, there are several additional challenges. First, after adding noise to the training data, the clean accuracy of the trained classifier typically drops due to the distribution shift in the training data.
To mitigate this problem, we add a deterministic value, based on the hash of the trained model, to test instances (Section~\ref{sec:models}), which minimizes the distribution shift and leads to improved accuracy scores.
Second, considering different smoothing distributions for the training data, we provide rigorous analysis and a robustness bound for both Gaussian and uniform smoothing distributions (Section~\ref{sec:backdoor_attacks}).
Third, we note that the proposed training process requires sampling a large number of randomly perturbed training sets. As this is computationally expensive, we propose an efficient PTIME algorithm for $K$-NN classifiers (Section~\ref{sec:models}).

\vspace{0.5em}
\noindent{\bf Outline.}
In the following, we illustrate the RAB pipeline in three steps. In Section~\ref{sec:framework}, we introduce the theoretical foundations for a unified framework for certifying robustness against both evasion and backdoor attacks. In Section~\ref{sec:backdoor_attacks}, we introduce how to apply our unified framework to defend against backdoor attacks. In Section~\ref{sec:models}, we present RAB pipeline for two types of models --- DNNs and $K$-NN.

\section{Unified Framework for Certified Robustness}
\label{sec:framework}

In this section, we propose a unified theoretical framework for certified robustness against evasion and poisoning attacks for classification models. Our framework is based on the intuition that randomizing the prediction or training process will ``smoothen" the final prediction and therefore reduce the vulnerability to adversarial attacks. This principle has been successfully applied to certifying robustness against evasion attacks for classification models~\cite{cohen2019certified}. We first formally define the notion of a smoothed classifier where we extend upon previous work by randomizing \emph{both} the test instance and the training set. We then introduce basic terminology of hypothesis testing, from where we leverage the Neyman Pearson lemma to derive a generic robustness condition in Theorem~\ref{thm:main}. Finally, we show that this robustness condition is tight.

\subsection{Preliminaries}

\subsubsection{Smoothed Classifiers}
On a high level, a smoothed classifier $g$ is derived from a base classifier $h$ by introducing additive noise to the input consisting of test  and training instances. In a nutshell, the intuition behind randomized smoothing classifiers is that noise reduces the occurrence of regions with high curvature in the decision boundaries, resulting in reduced vulnerability to adversarial attacks. Recall that a classifier $h$, here serving as a base classifier, is defined as $h(x,\,\cD) = \arg\max_y p(y\lvert\,x,\,\cD)$ where $p$ is learned from a dataset $\cD$ and defines a conditional probability distribution over labels $y$. The final prediction is given by the most likely class under this learned distribution. A smoothed classifier is defined by
\begin{equation}
    \label{eq:smoothed_classifier}
    q(y\lvert\,x,\,\cD) = \Prob_{X,D}\left(h(x + X,\,\cD + D) = y\right)
\end{equation}
where we have introduced random variables $X\sim\Prob_X$ and $D\sim\Prob_D$ which act as smoothing distributions and  are assumed to be independent. We emphasize that $D$ is a collection of $n$ independent and identically distributed random variables $D^{(i)}$, each of which is added to a training instance in $\cD$. The final, smoothed classifier then assigns the most likely class to an instance $x$ under this new, ``smoothed" model $q$, so that
\begin{equation}
    g(x,\,\cD) = \arg\max_y q(y\lvert\,x,\,\cD).
\end{equation}
Within this formulation of a smoothed classifier, we can also model randomized smoothing for defending against evasion attacks by setting the training set noise to be zero, i.e. $D \equiv 0$. We emphasize at this point that the smoothed classifier $g$ implicitly depends on the choice of noise distributions $\Prob_X$ and $\Prob_D$. In section~\ref{sec:backdoor_attacks} we instantiate this classifier with Gaussian noise and with uniform noise and show how this leads to different robustness bounds.

\subsubsection{Statistical Hypothesis Testing}
Hypothesis testing is a statistical problem that is concerned with the question of whether or not some hypothesis that has been formulated is correct.
A decision procedure for such a problem is called a statistical hypothesis test.
Formally, the decision is based on the value of a realization $x$ for a random variable $X$ whose distribution is known to be either $\Prob_0$ (the null hypothesis) or $\Prob_1$ (the alternative hypothesis).
Given a sample $x\in\cX$, a randomized test $\phi$ can be modeled as a function $\phi\colon\cX \to [0,\,1]$ which rejects the null hypothesis with probability $\phi(x)$ and accepts it with probability $1 - \phi(x)$.
The two central quantities of interest are the probabilities of making a type I error, denoted by $\alpha(\phi;\,\Prob_0)$ and the probability of making a type II error, denoted by $\beta(\phi;\,\Prob_1)$. The former corresponds to the situation where the test $\phi$ decides for the alternative when the null is true, while the latter occurs when the alternative is true but the test decides for the null. Formally, $\alpha$ and $\beta$ are defined as
\begin{equation}
    \alpha(\phi;\,\Prob_0) = \bE_0(\phi(X)),\hspace{1em} \beta(\phi;\,\Prob_1) = \bE_1(1-\phi(X))
\end{equation}
where $\bE_{0}(\cdot)$ $(\bE_1(\cdot))$ denotes the expected value with respect to $\Prob_{0}$ $(\Prob_1)$.
The problem is to select the test $\phi$ which minimizes the probability of making a type II error, subject to the constraint that the probability of making a type-I error is below a given threshold $\alpha_0$. The Neyman Pearson lemma~\cite{neyman1933problem} states that a likelihood ratio test $\phi_{NP}$ is optimal, i.e. that $\alpha(\phi_{NP};\,\Prob_0) = \alpha_0$ and $\beta(\phi_{NP};\,\Prob_1) = \beta^*(\alpha_0;\,\Prob_0,\,\Prob_1)$ where
\begin{equation}
    \beta^*(\alpha_0;\,\Prob_0,\,\Prob_1) = \inf_{\phi\colon\alpha(\phi;\,\Prob_0) \leq \alpha_0} \beta(\phi;\,\Prob_1).
\end{equation}
In Theorem~\ref{thm:main}, we will see that we can leverage this formalism to get a robustness guarantee for smoothed classifiers. Additionally, stemming from the optimality of the likelihood ratio test, we show in Theorem~\ref{thm:tightness} that this condition is tight.

\subsection{A General Condition for Provable Robustness}

In this section, we derive a tight robustness condition by drawing a connection between statistical hypothesis testing and the robustness of classification models subject to adversarial attacks. We allow adversaries to conduct an attack on either
\emph{(i) the test instance $x$},
\emph{(ii) the training set $\cD$}
or \emph{(iii) a combined attack on test and training set}.
The resulting robustness condition is of a general nature and is expressed in terms of the optimal type II errors for likelihood ratio tests.
We remark that this theorem is a more general version of the result presented in~\cite{cohen2019certified}, by extending it to general smoothing distributions and smoothing on the training set.
In Section~\ref{sec:backdoor_attacks} we will show how this result can be used to obtain robustness bound in terms of $L_p$-norm bounded backdoor attacks. We show that smoothing on the training set makes it possible for certifying the robustness against backdoors, and the general smoothing distribution allows us to explore the robustness bound certified by different smoothing distributions.

\begin{thm}
    \label{thm:main}
    Let $q$ be the smoothed classifier as in~\eqref{eq:smoothed_classifier} with smoothing distribution $Z:=(X,\,D)$ with $X$ taking values in $\bR^d$ and $D$ being a collection of $n$ independent $\bR^d$-valued random variables, $D=(D^{(1)},\,\ldots,\,D^{(n)})$. Let $\Omega_x\in\bR^d$ and let $\Delta:=(\delta_1,\,\ldots,\,\delta_n)$ for backdoor patterns $\delta_i\in\bR^d$. Let $y_A \in \cC$ and let $p_A,\,p_B\in[0,\,1]$ such that $y_A = g(x,\,\cD)$ and
    \begin{equation}
        \label{eq:confidence}
        q(y_A\lvert\,x,\,\cD) \geq p_A > p_B \geq \max_{y\neq y_A} q(y\lvert\,x,\,\cD).
    \end{equation}
    If the optimal type II errors, for testing the null $Z\sim\Prob_0$ against the alternative $Z + (\Omega_x,\,\Delta) \sim \Prob_1$, satisfy
    \begin{equation}
        \label{eq:robustness_condition}
        \beta^*(1-p_A;\,\Prob_0,\,\Prob_1) + \beta^*(p_B;\,\Prob_0,\,\Prob_1) > 1,
    \end{equation}
    then it is guaranteed that $y_A = \arg\max_y q(y\lvert\,x + \Omega_x,\,\cD + \Delta)$.
\end{thm}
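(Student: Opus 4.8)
The plan is to mimic the Neyman--Pearson argument of \cite{cohen2019certified}, but carried out at the level of the joint smoothing variable $Z=(X,D)$ rather than just the test instance. The key observation is that the value of the smoothed model at the perturbed point, $q(y\,\lvert\,x+\delta,\,\cD+\Delta)$, is exactly $\bE_{Z\sim\Prob_0}\!\left(p(y\,\lvert\,(x,\cD)+Z+(\delta,\Delta))\right)=\bE_{Z\sim\Prob_1}\!\left(p(y\,\lvert\,(x,\cD)+Z)\right)$, i.e. it is the expectation of the fixed $[0,1]$-valued function $z\mapsto p(y\,\lvert\,(x,\cD)+z)$ under the shifted measure $\Prob_1$. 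Meanwhile $q(y\,\lvert\,x,\cD)$ is the expectation of the same function under $\Prob_0$. So the entire question reduces to: given a function $\psi\colon\cX\to[0,1]$ with known $\Prob_0$-expectation, how small can its $\Prob_1$-expectation be, and how large? That is precisely the quantity controlled by optimal type II errors of likelihood ratio tests.

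First I would set $\psi_A(z):=p(y_A\,\lvert\,(x,\cD)+z)$ and, for each $y\neq y_A$, $\psi_y(z):=p(y\,\lvert\,(x,\cD)+z)$. For the lower bound on $\bE_1(\psi_A)$, I would argue that among all measurable $\phi\colon\cX\to[0,1]$ with $\bE_0(\phi)=\bE_0(\psi_A)$, the function minimizing $\bE_1(\phi)$ is a likelihood-ratio test, and the minimum equals $1-\beta^*(\alpha_0)$ evaluated at the level $\alpha_0 = 1-\bE_0(\psi_A)$ — here identifying the test's ``acceptance region'' mass under $\Prob_1$ with $1-\beta$ and under $\Prob_0$ with $1-\alpha$. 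Since $\bE_0(\psi_A)=q(y_A\,\lvert\,x,\cD)\ge p_A$, monotonicity of $\beta^*$ gives $\bE_1(\psi_A)\ge 1-\beta^*(1-p_A)$. Symmetrically, for the upper bound on $\bE_1(\psi_y)$ I would use that $\bE_0(\psi_y)\le p_B$ and that maximizing $\bE_1(\phi)$ subject to $\bE_0(\phi)\le p_B$ is again solved by a likelihood-ratio test, yielding $\bE_1(\psi_y)\le \beta^*(p_B)$ (now reading $\beta^*$ off the ``rejection side''). Combining, $q(y_A\,\lvert\,x+\delta,\cD+\Delta) \ge 1-\beta^*(1-p_A) > \beta^*(p_B) \ge q(y\,\lvert\,x+\delta,\cD+\Delta)$ for every $y\neq y_A$, where the strict middle inequality is exactly hypothesis~\eqref{eq:robustness_condition}. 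Taking the argmax then gives $y_A=\arg\max_y q(y\,\lvert\,x+\delta,\cD+\Delta)$, as claimed.

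The main obstacle is getting the Neyman--Pearson reduction completely rigorous in this generality. Two points need care. The first is purely bookkeeping but easy to get wrong: matching up the four quantities $\alpha_0=1-p_A$, $p_B$, and the two uses of $\beta^*$ with the correct ``accept/reject'' orientation, so that one use of $\beta^*$ lower-bounds $\bE_1(\psi_A)$ and the other upper-bounds $\bE_1(\psi_y)$; a sign slip here collapses the whole argument. The second is the measure-theoretic substance: the optimality claim ``the extremal $\phi$ is a likelihood ratio test'' is the Neyman--Pearson lemma, but I am applying it to the possibly non-$\{0,1\}$-valued $\psi_A,\psi_y$, so I need the version of the lemma that compares an arbitrary randomized test against the LR test at the same level, i.e. $\bE_1(\phi_{NP}-\phi)\ge 0$ whenever $\alpha(\phi)\le\alpha(\phi_{NP})$ — and for that I should check that $\Prob_1\ll$ (the law of $Z+(\delta,\Delta)$) is simply a translation of $\Prob_0$, so densities exist and the LR is well-defined (in the degenerate cases one falls back to total-variation bounds, which the later sections will instantiate). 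I would state this as a lemma (``for any $\psi\colon\cX\to[0,1]$, $\inf\{\bE_1(\psi'):\bE_0(\psi')\ge\bE_0(\psi)\}=1-\beta^*(1-\bE_0(\psi))$'') and prove it once, then apply it twice. Everything else — independence of $X$ and $D$, the fact that $D$ is an $n$-fold product, the specific shift $(\delta,\Delta)$ — is inert at this level and only matters when $\beta^*$ is computed explicitly in Section~\ref{sec:backdoor_attacks}.
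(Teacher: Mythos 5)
Your overall strategy is exactly the one the paper uses: view $\psi_A(z)=p(y_A\lvert\,(x,\cD)+z)$ and $\psi_y(z)=p(y\lvert\,(x,\cD)+z)$ as randomized tests, note that $q(\cdot\lvert\,x,\cD)=\bE_0(\psi_\cdot)$ and $q(\cdot\lvert\,x+\delta,\cD+\Delta)=\bE_1(\psi_\cdot)$, and apply a Neyman--Pearson extremal bound twice (this is precisely Lemma~\ref{lem:np_aux} in the appendix). However, you have committed exactly the orientation slip you flagged as fatal, and it appears in all three places. With the paper's conventions $\alpha(\phi)=\bE_0(\phi)$ and $\beta(\phi)=\bE_1(1-\phi)$, the substitution $\phi'=1-\phi$ gives $\inf\{\bE_1(\phi)\colon\bE_0(\phi)\geq c\}=\inf\{\beta(\phi')\colon\alpha(\phi')\leq 1-c\}=\beta^*(1-c)$, \emph{not} $1-\beta^*(1-c)$; a sanity check with $\Prob_0=\Prob_1$ (where $\beta^*(\alpha_0)=1-\alpha_0$) confirms this, since there the infimum is trivially $c=\beta^*(1-c)$ while your formula gives $1-c$. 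Hence the correct lower bound is $q(y_A\lvert\,x+\delta,\cD+\Delta)\geq\beta^*(1-p_A)$, and symmetrically the correct upper bound is $q(y\lvert\,x+\delta,\cD+\Delta)\leq 1-\beta^*(p_B)$. Your proposed lemma, $\inf\{\bE_1(\psi')\colon\bE_0(\psi')\geq\bE_0(\psi)\}=1-\beta^*(1-\bE_0(\psi))$, is therefore false as stated.

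The same flip infects your concluding chain: you assert that $1-\beta^*(1-p_A)>\beta^*(p_B)$ ``is exactly hypothesis~\eqref{eq:robustness_condition}'', but~\eqref{eq:robustness_condition} says $\beta^*(1-p_A)+\beta^*(p_B)>1$, i.e.\ $\beta^*(1-p_A)>1-\beta^*(p_B)$, which is the \emph{reverse} of the inequality you invoke, so your displayed chain is derived from the negation of the hypothesis rather than from the hypothesis. With the corrected endpoints the chain reads $q(y_A\lvert\,x+\delta,\cD+\Delta)\geq\beta^*(1-p_A)>1-\beta^*(p_B)\geq q(y\lvert\,x+\delta,\cD+\Delta)$ and does follow from~\eqref{eq:robustness_condition}; this is exactly the paper's proof. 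So the architecture is sound and the repair is mechanical (swap $\beta^*\leftrightarrow 1-\beta^*$ consistently in both extremal identifications and in the reading of the hypothesis), but as written the key lemma is wrong and the final inequality does not follow from the stated assumptions.
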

The following is a short sketch of the proof for this theorem. We refer the reader to Appendix~\ref{appendix:main_theorem_proof} for details.
\begin{proof}[Proof (Sketch)]
    We first explicitly construct the likelihood ratio tests $\phi_A$ and $\phi_B$ for testing the null hypothesis $Z$ against the alternative $Z + (\Omega_x,\,\Delta)$ with type I errors $\alpha(\phi_A;\,\Prob_0) = 1-p_A$ and $\alpha(\phi_B;\,\Prob_0) = p_B$ respectively.
    An argument similar to the Neyman-Pearson Lemma~\cite{neyman1933problem} shows that the class probability for $y_A$ given by $q$ on the perturbed input is lower bounded by $\beta(\phi_A;\,\Prob_1) = \beta^*(1-p_A;\,\Prob_0,\,\Prob_1)$.
    A similar reasoning leads to the fact that an upper bound on the prediction score for $y\neq y_A$ on the perturbed input is given by $1-\beta(\phi_B;\,\Prob_1) = 1- \beta^*(p_B;\,\Prob_0,\,\Prob_1)$.
    Combining this leads to condition~(\ref{eq:robustness_condition}).
\end{proof}

We now make some \underline{observations} about Theorem~\ref{thm:main} to get intuition on the robustness condition~(\ref{eq:robustness_condition}):
\begin{itemize}[leftmargin=*]
    \item Different smoothing distributions lead to robustness bounds in terms of different norms. For example, Gaussian noise yields robustness bound in $L_2$ norm while Uniform noise leads to other $L_p$ norms.
    \item The robustness condition~(\ref{eq:robustness_condition}) does not make any assumption on the underlying classifier other than on the class probabilities predicted by its smoothed version.
    \item The random variable $Z + (\Omega_x,\,\Delta)$ models a general adversarial attack including evasion and backdoor attacks.
    \item If no attack is present, i.e., if $(\Omega_x,\,\Delta)=(0,\,0)$, then we get the trivial condition $p_A > p_B$.
    \item As $p_A$ increases, the optimal type II error increases for given backdoor $(\Omega_x,\,\Delta)$. Thus, in the simplified setup where $p_A + p_B = 1$ and the robustness condition reads $\beta^*(1-p_A;\,\Prob_0,\,\Prob_1) > 1/2$,  the distribution shift caused by $(\Omega_x,\,\Delta)$ can increase. Thus, as the smoothed classifier becomes more confident, the robust region becomes larger.
\end{itemize}

While the generality of Theorem~\ref{thm:main} allows us to model a multitude of threat models, it bears the challenge of how one should instantiate this theorem such that it is applicable to defend against a specific adversarial attack.
In addition to the flexibility with regard to the underlying threat model, we are also provided with flexibility regarding the smoothing distributions, resulting in different robustness guarantees.
This again begs the question, of which smoothing distribution results in useful robustness bounds.
In Section~\ref{sec:backdoor_attacks}, we will show how this theorem can be applied to obtain the robustness guarantee against backdoor attacks described in Section~\ref{s:threat}.

Next, we show that our robustness condition is tight in the following sense:
If~(\ref{eq:confidence}) is all that is known about the smoothed classifier $g$, then there is no perturbation $(\Omega_x,\Delta)$ that violates~(\ref{eq:robustness_condition}). On the other hand, if~(\ref{eq:robustness_condition}) is violated, then we can always construct a smoothed classifier $g^*$ such that it satisfies the class probabilities~(\ref{eq:confidence}) but is not robust against this perturbation.
\begin{thm}
    \label{thm:tightness}
    Suppose that $1 \geq p_A + p_B \geq 1 - (C-2)\cdot p_B$. If the adversarial perturbations $(\Omega_x,\,\Delta)$ violate~(\ref{eq:robustness_condition}), then there exists a base classifier $h^*$ such that the smoothed classifer $g^*$ is consistent with the class probabilities~(\ref{eq:confidence}) and for which $g^*(x+\Omega_x,\,\cD+\Delta) \neq y_A$.
\end{thm}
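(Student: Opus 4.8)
The plan is to adapt the tightness argument of Cohen et al.~\cite{cohen2019certified} to our more general setting by building an explicit ``worst-case'' base classifier directly out of the Neyman--Pearson tests underlying Theorem~\ref{thm:main}. Write $\Prob_0$ for the law of $Z=(X,D)$ and $\Prob_1$ for the law of $Z+(\delta,\Delta)$, and let $f_0,f_1$ denote their densities with respect to a common dominating measure (for instance $\Prob_0+\Prob_1$). Translating the coordinate system by the (feature parts of the) clean input $(x,\cD)$, a base classifier's learned conditional law may be regarded as a family $z\mapsto p(\cdot\mid z)$ of probability vectors on $\cC$, and the smoothed scores at the two points of interest become $q(y\mid x,\cD)=\bE_0[p(y\mid Z)]$ and $q(y\mid x+\delta,\cD+\Delta)=\bE_1[p(y\mid Z)]$. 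Since the framework places no restriction on $p$ beyond being a conditional distribution, we are free to choose it adversarially.

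First I would fix, for each $a\in[0,1]$, a (possibly randomized) likelihood-ratio test $\phi_a$ for $H_0:Z\sim\Prob_0$ against $H_1:Z\sim\Prob_1$ with exact type-I error $\alpha(\phi_a)=a$; by the Neyman--Pearson lemma such a test attains $\beta(\phi_a)=\beta^*(a)$. Set $h_A:=1-\phi_{1-p_A}$ and $h_B:=\phi_{p_B}$, so that a direct computation yields $\bE_0[h_A]=p_A$, $\bE_1[h_A]=\beta^*(1-p_A)$, $\bE_0[h_B]=p_B$, and $\bE_1[h_B]=1-\beta^*(p_B)$. Pick any class $y_B\neq y_A$ and define the base classifier $h^*$ through the conditional distribution
\[
p(y_A\mid z)=h_A(z),\qquad p(y_B\mid z)=h_B(z),\qquad p(y\mid z)=\frac{1-h_A(z)-h_B(z)}{C-2}\ \text{ for } y\notin\{y_A,y_B\},
\]
with $g^*$ the smoothed classifier it induces.

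It then remains to check the two conclusions. For consistency with~(\ref{eq:confidence}): taking $\bE_0$ above gives $q(y_A\mid x,\cD)=p_A$, $q(y_B\mid x,\cD)=p_B$, and $q(y\mid x,\cD)=(1-p_A-p_B)/(C-2)$ for every other class, and the lower hypothesis $p_A+p_B\geq 1-(C-2)p_B$ is exactly what makes the latter $\leq p_B$; together with $p_A>p_B$ this shows $g^*(x,\cD)=y_A$ and that $p_A,p_B$ are admissible confidence bounds. For non-robustness: taking $\bE_1$ gives $q(y_A\mid x+\delta,\cD+\Delta)=\beta^*(1-p_A)$ and $q(y_B\mid x+\delta,\cD+\Delta)=1-\beta^*(p_B)$, so violation of~(\ref{eq:robustness_condition}), i.e.\ $\beta^*(1-p_A)+\beta^*(p_B)\leq 1$, forces $q(y_A\mid x+\delta,\cD+\Delta)\leq q(y_B\mid x+\delta,\cD+\Delta)$, hence $g^*(x+\delta,\cD+\Delta)\neq y_A$ (letting $g^*$ break an exact tie towards $y_B$ in the boundary case $\beta^*(1-p_A)+\beta^*(p_B)=1$).

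The main obstacle, and the only place the upper hypothesis $p_A+p_B\leq 1$ is used, is verifying that $p(\cdot\mid z)$ really is a probability vector, which reduces to the pointwise inequality $h_A(z)+h_B(z)\leq 1$ (nonnegativity of the leftover mass and feasibility of splitting it over the remaining $C-2$ classes then follow, the degenerate case $C=2$ forcing $p_A+p_B=1$ and $h_A+h_B\equiv 1$). Here I would use that $p_B\leq 1-p_A$ makes the likelihood-ratio threshold of $\phi_{p_B}$ at least that of $\phi_{1-p_A}$: on $\{f_1/f_0>t_{p_B}\}$ we have $h_B=1,\ h_A=0$; on $\{f_1/f_0<t_{1-p_A}\}$ we have $h_A=1,\ h_B=0$; strictly in between both vanish; and on a common level set $\{f_1/f_0=t\}$ the randomization weights are ordered by the type-I-error identities so that $h_A+h_B\leq 1$ still holds. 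Some care is needed to handle level sets of the likelihood ratio that carry positive mass (these genuinely occur for the general smoothing distributions we allow, e.g.\ uniform noise, where $f_1/f_0$ takes only the values $0,1,\infty$) as well as the regions $\{f_0=0\}$ and $\{f_1=0\}$; but this is precisely the generalization of the nested-half-space construction used in the Gaussian case of~\cite{cohen2019certified}.
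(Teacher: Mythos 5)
Your construction is essentially the paper's own proof: the same base classifier built from the two Neyman--Pearson tests ($1-\phi_{1-p_A}$ assigned to $y_A$, $\phi_{p_B}$ to $y_B$, the remainder spread uniformly over the other $C-2$ classes), the same use of $p_A+p_B\le 1$ to obtain the pointwise ordering of the two tests so that the leftover mass is nonnegative, and the same use of the hypothesis $p_A+p_B\ge 1-(C-2)p_B$ to keep the remaining classes below $p_B$. Your computation at the perturbed point, namely $q^*(y_A\lvert\,x+\delta,\cD+\Delta)=\beta^*(1-p_A)$ and $q^*(y_B\lvert\,x+\delta,\cD+\Delta)=1-\beta^*(p_B)$, is in fact the correct one (the paper's final display swaps $\beta$ and $1-\beta$ but reaches the same conclusion), and your explicit treatment of atoms of the likelihood ratio and of the regions where a density vanishes is a careful filling-in of the step the paper delegates to the construction in the proof of Theorem~\ref{thm:main}.
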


\section{Provable Robustness Against Backdoors}
\label{sec:backdoor_attacks}

It is not straightforward to use the result from Theorem~\ref{thm:main} to get a robustness certificate against backdoor attacks in terms of $L_p$-norm bounded backdoor patterns.
In this section, we aim to answer the question: \emph{how can we instantiate this result to obtain robustness guarantees against backdoor attacks?}
In particular, we show that by leveraging Theorem~\ref{thm:main}, we obtain the robustness guarantee defined in Section~\ref{s:threat}.
To that end, we derive robustness bounds for smoothing with isotropic Gaussian noise
and we also illustrate how to
derive certification bounds
using other smoothing distributions.
Since isotropic Gaussian noise leads to a better radius, we will use this distribution in our experiments as a demonstration.

\subsection{Method Outline}

\subsubsection{Intuition}
Suppose that we are given a base classifier that has been trained on a \emph{backdoored} dataset that contains $r$ training samples which are infected with backdoor patterns $\Delta(\Omega_x)$.
Our goal is to derive a condition on the backdoor patterns $\Delta(\Omega_x)$ such that the prediction for $x + \Omega_x$ with a classifier trained on the backdoored dataset $\cD_{BD}(\Delta(\Omega_x))$ is the same as the prediction (on the same input) that a smoothed classifier would have made, had it been trained on a dataset without the backdoor triggers, $\cD_{BD}(\emptyset)$.
In other words, we obtain the guarantee that \emph{an attacker can not achieve their goal of systematically leading the test instance with the backdoor pattern to the adversarial target}, meaning they will always obtain the same prediction as long as the added pattern $\delta$ satisfies certain conditions (bounded magnitude).
\subsubsection{Gaussian Smoothing}
We obtain this certificate by instantiating Theorem~\ref{thm:main} in the following way.
Suppose an attacker injects backdoor patterns $\Delta(\Omega_x) = \{\delta_1,\,\ldots,\,\delta_r\}\subset\bR^d$ to $r \leq n$ training instances of the training set $\cD$, yielding the backdoored training set $\cD_{BD}(\Delta(\Omega_x))$.
We then train the base classifier on this poisoned dataset, augmented with additional noise on the feature vectors
$\cD_{BD}(\Delta(\Omega_x)) + D$, where $D$ is the smoothing noise added to the training features. We obtain a prediction of the smoothed classifier $g$ by taking the expectation with respect to the distribution of the smoothing noise $D$.
Suppose that the smoothed classifier obtained in this way predicts a malicious instance $x + \Omega_x$ to be of a certain class with probability at least $p_A$ and the runner-up class with probability at most $p_B$.
Our result tells us that, as long as the introduced patterns satisfy condition~(\ref{eq:robustness_condition}), we get the guarantee that the malicious test input would have been classified equally as when the classifier had been trained on the dataset with clean features $\cD_{BD}(\emptyset)$.
In the case where the noise variables are isotropic Gaussians with standard deviation $\sigma$, the condition~(\ref{eq:robustness_condition}) yields a robustness bound in terms of the sum of $L_2$-norms of the backdoors.
\begin{cor}[Gaussian Smoothing]
    \label{cor:gaussian}
    Let $\Delta = (\delta_1,\,\ldots,\,\delta_n)$ and $\Omega_x$ be $\bR^d$-valued backdoor patterns and let $\cD$ be a training set. Suppose that for each $i$, the smoothing noise on the training features is $D^{(i)} \overset{iid}{\sim} \cN(0,\,\sigma^2\Id_d)$. Let $y_A\in\cC$ such that $y_A = g(x+\Omega_x,\,\cD + \Delta)$ with class probabilities satisfying
    \begin{equation}
        \begin{split}
        q(y_A\lvert\,x+\Omega_x,\,\cD + \Delta) &\geq p_A\\&\hspace{-4em} > p_B \geq \max_{y\neq y_A} q(y\lvert\,x + \Omega_x,\,\cD + \Delta).
        \end{split}
    \end{equation}
    Then, if the backdoor patterns are bounded by
    \begin{equation}
        \label{eq:gaussian_bound}
        \sqrt{\sum_{i=1}^n \left\|\delta_i\right\|_2^2} < \frac{\sigma}{2}\left(\Phi^{-1}(p_A) - \Phi^{-1}(p_B)\right),
    \end{equation}
     it is guaranteed $y_A = g(x + \Omega_x,\, \cD) = g(x + \Omega_x,\, \cD + \Delta)$.
\end{cor}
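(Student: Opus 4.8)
The plan is to apply Theorem~\ref{thm:main} to the backdoored configuration, taking the \emph{backdoored} training set as the base point and the \emph{clean} one as the perturbed point. Concretely, I would invoke Theorem~\ref{thm:main} with base instance $(x+\delta,\,\cD+\Delta)$ — where the confidence bounds $p_A,\,p_B$ are assumed — and with adversarial shift $(\delta',\,\Delta'):=(0,\,-\Delta)$, so that the perturbed configuration $(x+\delta+\delta',\,\cD+\Delta+\Delta')$ equals $(x+\delta,\,\cD)$. Note that the backdoor pattern $\delta$ on the test instance plays no role in the argument: it is held fixed in both evaluations of $q$, so the only shift is in the training features, from $\cD+\Delta$ back to $\cD$. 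Theorem~\ref{thm:main} then yields $y_A = \arg\max_y q(y\lvert\,x+\delta,\,\cD)$, which is exactly the desired conclusion, provided the robustness condition~\eqref{eq:robustness_condition} holds for testing $Z\sim\Prob_0$ against $Z+(0,\,-\Delta)\sim\Prob_1$.

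So the task reduces to computing the optimal type II error $\beta^*$ for this Gaussian hypothesis test and checking that~\eqref{eq:gaussian_bound} is equivalent to~\eqref{eq:robustness_condition}. Since the test-instance coordinate is unperturbed, the likelihood ratio $d\Prob_1/d\Prob_0$ does not depend on the $X$-component (whatever its distribution), and the problem becomes a test of $\cN(0,\,\sigma^2\Id_{nd})$ against $\cN(-\Delta,\,\sigma^2\Id_{nd})$ on the stacked training-noise coordinates, where $\Delta$ is regarded as an element of $\bR^{nd}$ with $\|\Delta\|_2^2 = \sum_{i=1}^n\|\delta_i\|_2^2$ (the backdoor-free entries contributing $0$). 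I would then collapse this to a one-dimensional problem by projecting onto the unit vector in the direction of $\Delta$: under $\Prob_0$ the projection is $\cN(0,\sigma^2)$ and under $\Prob_1$ it is $\cN(\pm\|\Delta\|_2,\sigma^2)$, so the Neyman--Pearson threshold test gives $\beta^*(\alpha_0) = \Phi\big(\Phi^{-1}(1-\alpha_0) - \|\Delta\|_2/\sigma\big)$ (the sign of the shift is immaterial, since $\beta^*$ depends only on $\|\Delta\|_2$).

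Plugging $\alpha_0 = 1-p_A$ and $\alpha_0 = p_B$ into this formula, using $\Phi^{-1}(1-p_B) = -\Phi^{-1}(p_B)$ together with $\Phi(-t) = 1-\Phi(t)$, the condition $\beta^*(1-p_A) + \beta^*(p_B) > 1$ becomes $\Phi(\Phi^{-1}(p_A) - \|\Delta\|_2/\sigma) > \Phi(\Phi^{-1}(p_B) + \|\Delta\|_2/\sigma)$; by strict monotonicity of $\Phi$ this is equivalent to $\|\Delta\|_2 < \tfrac{\sigma}{2}\big(\Phi^{-1}(p_A) - \Phi^{-1}(p_B)\big)$, i.e. to~\eqref{eq:gaussian_bound}. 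Combining this with the first paragraph completes the proof.

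The main obstacles here are bookkeeping rather than conceptual. First, one must correctly identify which configuration plays the role of the ``clean'' base point in Theorem~\ref{thm:main} — it is the \emph{backdoored} dataset, because that is where the confidence bounds are given — and recognize that the induced perturbation is $-\Delta$ on the training set and $0$ on the test point. Second, one must handle the unperturbed $X$-coordinate so that it cancels in the likelihood-ratio test. Third, one needs the (standard) computation of $\beta^*$ for a pair of isotropic Gaussians with common covariance, which hinges on the observation that sufficiency collapses the $nd$-dimensional test to a scalar test along the mean-difference direction. None of these is individually hard, but care is required so that the quantity $\sqrt{\sum_{i=1}^n\|\delta_i\|_2^2}$ produced by the Gaussian computation lines up precisely with the norm appearing in the stated bound.
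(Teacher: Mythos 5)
Your proposal is correct and follows essentially the same route as the paper's proof: both apply Theorem~\ref{thm:main} at the backdoored base point $(x+\delta,\,\cD+\Delta)$ with the reverse perturbation $(0,\,-\Delta)$, and both reduce the robustness condition to the Neyman--Pearson type II errors for a mean-shifted isotropic Gaussian, yielding $\beta^*(1-p_A)=\Phi\bigl(\Phi^{-1}(p_A)-\|\Delta\|_2/\sigma\bigr)$ and $\beta^*(p_B)=\Phi\bigl(\Phi^{-1}(1-p_B)-\|\Delta\|_2/\sigma\bigr)$. Your projection onto the mean-difference direction is just a cleaner packaging of the paper's explicit likelihood-ratio-test construction (the log-likelihood ratio is a linear function of that projection), so the two arguments coincide.
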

This result shows that, whenever the norms of the backdoor patterns are below a certain value, we obtain the guarantee that the classifier makes the same prediction on the test data with backdoors  as it does when trained without embedded patterns in the training set.
We can further simplify the robustness bound in~(\ref{eq:gaussian_bound}) if we can assume that an attacker poisons at most $r\leq n$ training instances with one single pattern $\delta$. In this case, the bound~(\ref{eq:gaussian_bound}) is given by
\begin{equation}
    \label{eq:gaussian_bound_single}
    \left\|\delta\right\|_2 < \frac{\sigma}{2\sqrt{r}}\left(\Phi^{-1}(p_A) - \Phi^{-1}(p_B)\right).
\end{equation}
We see that, as we know more about the capabilities of an attacker and the nature of the backdoor patterns, we are able to certify a larger robustness radius, proportional to $1/\sqrt{r}$.

\subsection{Other Smoothing Distributions}
Given the generality of our
framework, it is possible to
derive certification
bounds using other
smoothing distributions.
However, different smoothing
distributions have vastly
different performance and
a comparative study
among different smoothing
distributions is
interesting future work.
In this paper, we will just illustrate
one example of smoothing
using a uniform distribution.

\begin{cor}[Uniform Smoothing]
    \label{cor:uniform}
    Let $\Delta = (\delta_1,\,\ldots,\,\delta_n)$ and $\Omega_x$ be $\bR^d$ valued backdoor patterns and let $\cD$ be a training set. Suppose that for each $i$, the smoothing noise on the training features is $D^{(i)} \overset{iid}{\sim} \cU([a,\,b])$. Let $y_A\in\cC$ such that $y_A = g(x+\Omega_x,\,\cD + \Delta)$ with class probabilities satisfying
    \begin{equation}
        \begin{split}
        q(y_A\lvert\,x+\Omega_x,\,\cD + \Delta) &\geq p_A
        \\&\hspace{-4em}
        > p_B \geq \max_{y\neq y_A} q(y\lvert\,x + \Omega_x,\,\cD+ \Delta).
        \end{split}
    \end{equation}
    Then, if the backdoor patterns satisfy
    \begin{equation}
        \label{eq:uniform_bound}
        1 - \left(\dfrac{p_A - p_B}{2}\right) < \prod_{i=1}^n\left(\prod_{j=1}^d\left(1 - \dfrac{|\delta_{i,j}|}{b-a}\right)_+\right)
    \end{equation}
    where $(x)_+ = \max\{x,0\}$, it is guaranteed that $y_A = g(x + \Omega_x,\, \cD) = g(x + \Omega_x,\, \cD + \Delta)$.
\end{cor}
As in the Gaussian case, the robustness bound in~\eqref{eq:uniform_bound} can again be simplified in a similar fashion, if we assume that an attacker poisons at most $r\leq n$ training instances with one single pattern $\delta$.
In this case, the bound~\eqref{eq:uniform_bound} is given by
\begin{equation}
    \label{eq:uniform_bound_simpler}
    1 - \left(\dfrac{p_A - p_B}{2}\right) < \left(\prod_{j=1}^d\left(1 - \frac{\left|\delta_{j}\right|}{b-a}\right)_+\right)^r.
\end{equation}
We see again that, as the number of infected training samples $r$ gets smaller, this corresponds to a larger bound since the RHS of~\eqref{eq:uniform_bound_simpler} gets larger. In other words, if we know that the attacker injects fewer backdoors, then we can certify a backdoor pattern with a larger magnitude.

\vspace{0.5em}
\noindent{\bf Discussions.}
We emphasize that in this paper, we focus on
protecting the system against
attackers who aim to \textit{trigger}
a targeted error with a specific
\textit{backdoor pattern}. The system can
still be vulnerable to other
types
of \textit{poisoning attacks}. One such example is
the label flipping attack,
in which
one flips the labels of a subset of
examples while keeping the features
untouched. Interestingly,
one concurrent work
explored the possibility of using
randomized smoothing to defend against
label flipping attack~\cite{rosenfeld2020certified}.
Developing a single framework to
be robust against both backdoor
and label flipping attacks is an exciting
future direction, and we expect it
to require nontrivial extensions
of both approaches
to achieve non-trivial certified accuracy.
Furthermore, while we focus the experiments on Gaussian smoothing and $L_2$-norm guarantees, it is in principle possible to certify other $L_p$-norms with different smoothing distributions.
For evasion attacks,~\cite{li2021tss} use exponential smoothing noise with certificates in $L_1$-norm.
Such analysis of different smoothing distributions for different experimental settings goes beyond the scope of this work and is interesting for future research.

\section{Instantiating the General Framework
with Specific ML Models}
\label{sec:models}

In the preceding sections, we presented our approach to certifying robustness against backdoor attacks.
Here, we will analyze and provide detailed algorithms for the RAB training pipeline for two types of machine learning models: deep neural networks and $K$-nearest neighbor classifiers.
The success of backdoor poisoning attacks against DNNs has caused a lot of attention recently.
Thus, we first aim to evaluate and certify the robustness of DNNs against backdoor attacks.
Secondly, given the fact that $K$-NN models have been widely applied in different applications, either based on raw data or on trained embeddings, it is of great interest to know about the robustness of this type of ML models. Specifically,
we are inspired by a recent
result~\cite{ccc} and develop an \textit{exact}
efficient smoothing algorithm for $K$-NN models, such that we do not need to draw a large number of random samples from the smoothing distribution for these models. This makes our approach considerably more practical for this type of classifier as it avoids the expensive training of a large number of models, as is required with generic classification algorithms including DNNs.

\begin{algorithm}[t!]
\caption{\textsc{DNN-RAB} for training certifiably robust DNNs.}
\label{alg:rab-train-dnn}
\begin{algorithmic}[1]
\REQUIRE Poisoned training dataset $\cD=\{(x_i + \delta_i, \tilde{y}_i)_{i=1}^n\}$, noise scale $\sigma$, model number $N$
\FOR{$k=1,\ldots, N$}
	\STATE Sample $\epsilon_{k,1}, \ldots, \, \epsilon_{k,n} \overset{\mathrm{iid}}{\sim} \cN(0, \sigma^2\Id_d)$.
	\STATE $\cD_k = \{(x_i + \delta_i + \epsilon_{k,i},\,\tilde{y}_i)_{i=1}^n\}$.
	\STATE $h_k = \texttt{train\_model}(\cD_k)$.
	\STATE Sample $u_k$ from $\mathcal{N}(0,\sigma^2{\Id_d})$ deterministically with random seed based on $hash(h_k)$.
\ENDFOR
\RETURN Model collection $\{(h_1,\,u_1),\,\ldots,\,(h_N,\,u_N)\}$
\end{algorithmic}
\end{algorithm}

\begin{algorithm}[t!]
\caption{Certified inference with RAB-trained models.}
\label{alg:rab-inference-dnn}
\begin{algorithmic}[1]
\REQUIRE Test sample $x$, noise scale $\sigma$, models $\{(h_k,\,u_k)\}_{k=1}^N$, backdoor magnitude $\|\delta\|_2$, number of poisoned training samples $r$
\STATE \texttt{counts} = $\abs{\{k\colon\,h_k(x + u_k,\,\cD + \epsilon_k) = y\}}$ for $y = 1,\,\ldots,\,C$
\STATE $y_A,\,y_B = $ top two indices in \texttt{counts}
\STATE $n_A,\,n_B =$ \texttt{counts}[$y_A$], \texttt{counts}[$y_B$]
\STATE $p_A, p_B = \texttt{calculate\_bound}(n_A, n_B, N, \alpha)$.
\IF {$p_A > p_B$}
    \STATE $R=\frac{\sigma}{2\sqrt{r}}\left(\Phi^{-1}(p_A) - \Phi^{-1}(p_B)\right)$
		\IF {$R \geq \|\delta\|_2$}
			\RETURN prediction $y_A$, robust radius $R$.
		\ENDIF
\ENDIF
\RETURN ABSTAIN
\end{algorithmic}
\end{algorithm}

\subsection{Deep Neural Networks}

\label{sec:models-dnn}
In this section, we consider smoothed models which use DNNs as base classifiers.
For a given test input $x_{test}$, the goal is to calculate the prediction of $g$ on $(x_{test}, \cD + \Delta)$ according to Corollary~\ref{cor:gaussian} and the corresponding certified bound given in the right hand side of Eq.~\eqref{eq:gaussian_bound}.
In the following, we first describe the training process and then the inference algorithm.

\subsubsection{RAB Training for DNNs}
First, we draw $N$ samples $d_1,\,\ldots,\,d_N$ from the distribution of $D\sim\prod_{i=1}^n\cN(0,\,\sigma^2\Id_d)$.
Given the $N$ samples of training noise (each consisting of $\abs{\cD} = n$ noise vectors), we train $N$ DNN models on the datasets $\cD+d_k$ for $k=1,\,\ldots,\,N$ and obtain classifiers $h_1,\,\ldots,\,h_N$. Along with each model $h_k$, we draw a random noise $u_k$ from $\cN(0,\,\sigma^2\Id_d)$ with a random seed based on the hash of the trained model file. This noise vector is stored along with the model parameters and added to each test input during inference. The reason for this is that, empirically, we observed that inputting test samples without this additional augmentation leads to poor prediction performance since the ensemble of models $\{h_1,\,\ldots,\,h_N\}$ has to classify an input that has not been perturbed by Gaussian noise, while it has only ``seen“ noisy samples, leading to a mismatch between training and test distributions.
Algorithm~\ref{alg:rab-train-dnn} shows the pseudocode describing RAB-training for DNN models.

\subsubsection{Inference}
To get the prediction of the smoothed classifier on a test sample $x_{test}$ we first compute the empirical majority vote as an unbiased estimate
\begin{equation}
    \label{eq:smoothed_dnn_1}
    \hat{q}(y\lvert\,x,\,\cD) = \frac{\abs{\left\{k\colon\,h_k(x_{test} + u_k,\,\cD + d_k)=y\right\}}}{N}
\end{equation}
of the class probabilities and where $u_k$ is the (model-) deterministic noise vector sampled during training in Algorithm~\ref{alg:rab-train-dnn}.
Second, for a given error tolerance $\alpha$, we compute $p_A$ and $p_B$ using one-sided $(1-\alpha)$ lower confidence intervals for the binomial distribution with parameters $n_A$ and $n_B$ and $N$ samples. Finally, we invoke Corollary~\ref{cor:gaussian} and first compute the robust radius according to Eq.~\eqref{eq:gaussian_bound_single}, based on $p_A,\,p_B$ the smoothing noise parameter $\sigma$ and the number of poisoned training samples $r$. If the resulting radius $R$ is larger than the magnitude of the backdoor samples $\delta$, the prediction is certified, i.e. the backdoor attack has failed on this particular sample. Algorithm~\ref{alg:rab-inference-dnn} shows the pseudocode for the DNN inference with RAB.

\subsubsection{Model-deterministic Test-time Augmentation}
\label{subsubsec:hash-aug}
One caveat in directly applying Equation~(\ref{eq:smoothed_dnn_1})
is the mismatch of the
training and test distribution --- during training, all examples are perturbed
with sampled noise, whereas the test
example is without noise. In practice, we see that this mismatch significantly decreases
the test accuracy. One natural
idea is to also add
noise to the test examples, however, this requires careful design (e.g., simply drawing $k$ independent noise vectors and applying them to Equation~(\ref{eq:smoothed_dnn_1}) will lead to a less powerful bound).
We thus modify the inference function
given a learned model $h_k$ in the following way. Instead of directly classifying an unperturbed input $x_{test}$, we use the hash value of the trained $h_k$ model parameters as the random seed
and sample $u_k \sim \mathcal{N}_{hash(h_k)}(0, \sigma^2{\Id_d})$. In practice, we use SHA256 hashing\cite{wiki:sha256} of the trained model file. In this way,
the noise we add is a deterministic
function of the trained model,
which is equivalent to altering
the inference function in a deterministic way, $\Tilde{h}_k(x_{test}) = h_k(x_{test} + u_k)$. We
show in the experiments that this leads to significantly
better prediction performance in practice.
Note that the reason for using a hash function instead of random sampling every time is to ensure that the noise generation process is deterministic, so the choice of different hash functions is flexible.

\subsection{K-Nearest Neighbors}

If the base classifier $h$ is a $K$-nearest neighbor classifier, we can evaluate the corresponding smoothed classifier \textit{exactly} and efficiently, in polynomial time, if the smoothing noise is drawn from a Gaussian distribution. In other words, for this type of model, we can eliminate the need to approximate the expectation value via Monte Carlo sampling and evaluate the classifier exactly. Finally, it is worth remarking that bypassing the need to do Monte Carlo sampling ultimately results in a considerable speed-up as it avoids the expensive training of independent models as is required for generic models including DNNs.

A $K$-NN classifier works in the following way:
Given a training set $\cD=\{(x_i,\,y_i)_{i=1}^n\}$ and a test example
$x$, we first calculate the similarity between $x$ and each $x_i$,
$s_i:=\kappa(x_i,\,x)$ where $\kappa$ is a similarity function. Given all these
similarity scores $\{s_i\}_i$, we choose the $K$ most similar training
examples with the largest similarity score $\{x_{\sigma_i}\}_{i=1}^K$ along with
corresponding labels $\{y_{\sigma_i}\}_{i=1}^K$. The final prediction is made
according to a majority vote among the top-$K$ labels.

Similar to DNNs, we obtain a smoothed $K$-NN classifier
by adding Gaussian noise to training points and evaluate the
expectation with respect to this noise distribution
\begin{equation}
    \label{eq:smoothed_knn_classifier}
    q(y\lvert\,x,\,\cD) = \Prob\left(\text{$K$-NN}(x,\,\cD + D) = y\right)
\end{equation}
where $D=(D^{(1)},\,\ldots,\,D^{(n)})\sim\prod_{i=1}^n\cN(0,\,\sigma^2\Id_d)$. The next theorem shows that~(\ref{eq:smoothed_knn_classifier}) can be computed
exactly and efficiently when we measure the similarity with euclidean distance quantized into finite number similarity of levels.
\begin{thm}
    \label{thm:knn_complexity}
    Given $n$ training instances, a $C$-multiclass $K$-NN classifier based on quantized euclidean distance with $L$ similarity levels, smoothed with isotropic Gaussian noise can be evaluated exactly with complexity $\cO(K^{2+C}\cdot n^2 \cdot L \cdot C)$.
\end{thm}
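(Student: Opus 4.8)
The plan is to reduce the evaluation of the smoothed $K$-NN classifier~\eqref{eq:smoothed_knn_classifier} to a dynamic program over the $n$ training instances. The key structural observation is that, because the smoothing noise vectors $D^{(1)},\ldots,D^{(n)}$ are independent, the quantized similarity levels $S_1,\ldots,S_n$ they induce — where $S_i\in\{1,\ldots,L\}$ records which of the $L$ distance bins the quantity $\|x-(x_i+D^{(i)})\|$ falls into — are \emph{independent} $\{1,\ldots,L\}$-valued random variables. Their marginals can be computed exactly: since $(x-x_i)-D^{(i)}\sim\cN(x-x_i,\,\sigma^2\Id_d)$, the squared distance $\|x-(x_i+D^{(i)})\|^2$ is $\sigma^2$ times a noncentral $\chi^2$ variable with $d$ degrees of freedom and noncentrality $\|x-x_i\|^2/\sigma^2$, so each $p_{i,\ell}:=\Prob(S_i=\ell)$ is a difference of two evaluations of that CDF, which we treat as a unit-cost primitive. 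Computing all $p_{i,\ell}$ costs $\cO(nL)$.

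Next I would write $q(y\lvert x,\cD)=\Prob(\text{$K$-NN}(x,\cD+D)=y)$ as a finite sum by conditioning on (i) the similarity level $\ell^{\star}$ of the $K$-th nearest neighbour and (ii) the index $i^{\star}$ of the instance occupying that position, under a fixed tie-breaking convention (say, smaller index counts as more similar). This is precisely where the quantization into $L$ levels is needed: it makes the conditioning event $\{S_{i^{\star}}=\ell^{\star}\}$ carry positive probability and makes the comparisons $\{S_j<\ell^{\star}\}$, $\{S_j=\ell^{\star}\}$, $\{S_j>\ell^{\star}\}$ finitely described. Conditioned on $(i^{\star},\ell^{\star})$, the top-$K$ set is a \emph{deterministic} function of the independent levels $S_j$: instance $j$ is selected iff $S_j<\ell^{\star}$, or $S_j=\ell^{\star}$ and $j\le i^{\star}$; for consistency we need exactly $K-1$ instances to beat $i^{\star}$, i.e.\ exactly $K$ instances selected in total.

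The core step is, for each fixed pair $(i^{\star},\ell^{\star})$, a dynamic program that sweeps $j=1,\ldots,n$ and maintains the probability distribution of the label histogram $\mathbf{v}=(v_1,\ldots,v_C)$ of the instances selected so far, together with $\cO(1)$ auxiliary counters bounded by $K$ (e.g.\ the total number selected, and the number of boundary-level instances already consumed); the state space therefore has size $\cO(K^{C+2})$. Each instance $j$ contributes a constant-size branch whose probabilities are read off from $p_{j,1},\ldots,p_{j,L}$ and depend only on whether $j<i^{\star}$, $j=i^{\star}$, or $j>i^{\star}$; states whose histogram mass exceeds $K$ are pruned. After the sweep one retains the states meeting the consistency constraint and accumulates their probability mass into the bucket indexed by $\arg\max_c v_c$ (broken with the same convention), obtaining in $\cO(C)$ time per state the contribution of $(i^{\star},\ell^{\star})$ to $q(\cdot\lvert x,\cD)$ for all $C$ classes at once. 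Summing over the $\cO(nL)$ pairs $(i^{\star},\ell^{\star})$, with an $\cO(n)$-instance sweep and $\cO(K^{C+2}\cdot C)$ work per pair, gives the claimed $\cO(K^{2+C}\cdot n^2\cdot L\cdot C)$ bound; finally $p_A$ and $p_B$ are read directly off the exact vector $q(\cdot\lvert x,\cD)$.

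I expect the main obstacle to be the combinatorics at the boundary of the top-$K$ set. The design has to choose the conditioning so that the selected set becomes a deterministic function of the remaining (still independent) randomness, so that the per-instance transitions stay local, and — most importantly — so that the DP state stays small: one wants the exponential factor to be just $K^{C}$ (the number of label histograms) up to a fixed polynomial in $K$, rather than, say, tracking separate histograms for the strictly-closer and the tied instances, which would push the exponent to $2C$. Verifying that the tie-breaking rule is applied consistently across all branches, and that degenerate configurations (fewer than $K$ instances at or above level $\ell^{\star}$, or more than one instance "forced" to rank $K$) contribute exactly zero, is the delicate part; the noncentral-$\chi^2$ computation of the $p_{i,\ell}$ and the arithmetic of the complexity bound are routine by comparison.
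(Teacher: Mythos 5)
Your proposal is correct and follows essentially the same route as the paper's proof: you compute the exact per-instance level probabilities from the noncentral $\chi^2$ CDF, exploit the independence of the quantized similarity levels, and decompose the prediction event by conditioning on which instance occupies the $K$-th rank and at which similarity level --- this is exactly the paper's partition into the boundary events $\cB_i^l(\gamma)$, with the same index-based tie-breaking making the top-$K$ set a deterministic function of the remaining independent levels. The only divergence is in the inner computation: you run a single joint dynamic program over instances whose state is the full label histogram (roughly $K^C$ states plus auxiliary counters), whereas the paper additionally observes that, conditioned on the boundary event, the per-class counts are independent, factors the probability as $p_i^l\cdot\prod_c P_c^l(i,\gamma)$, and evaluates each factor via two per-class count recursions $R_c^l$ and $Q_c^l$, summing explicitly over the $\cO(K^C)$ tally vectors; both organizations land on the claimed $\cO(K^{2+C}\cdot n^2\cdot L\cdot C)$ bound, with your joint DP trading the conditional-independence observation for a larger DP state. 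One minor remark: the auxiliary counters you carry are not actually needed --- the number of instances selected so far is determined by the histogram, and whether $j$ beats $i^{\star}$ depends only on $S_j$ and the sign of $j-i^{\star}$, not on any running count of tied instances --- so the state space is really just the histogram; this only makes your accounting conservative, not incorrect.
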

\begin{proof}[Proof (sketch)]
    The first step to computing~(\ref{eq:smoothed_knn_classifier}) is to notice that we can summarize all possible arrangements $\{x_{\sigma_i} + D^{(\sigma_i)}\}_{i=1}^K$ of top-$K$ instances leading to a prediction by using tally vectors $\gamma\in[K]^C$. A tally vector has as its $k$-th element the number of instances in the top-$K$ with label $k$, $\gamma_k = \#\{y_{\sigma_i}=k\}$. In the second step, we partition the event that a tally vector $\gamma$ occurs into events where an instance $i$ with similarity $\beta$ is in the top-$K$ but would not be in the top-$(K-1)$. These first two steps result in a summation over $\cO(K^{C}\cdot n \cdot L\cdot C)$ terms. In the last step, we compute the probabilities of the events $\{\mathrm{tally}\,\gamma\,\land\,\kappa(x_i+ D^{(i)},\,x)=\beta\}$ with dynamic programming in $\cO(n\cdot K^2)$ steps, resulting in a final time complexity of $\cO(K^{2+C}\cdot n^2 \cdot L \cdot C)$.
\end{proof}
If $K=1$, an efficient algorithm can even achieve time complexity linear in the number of training samples $n$.
We refer the reader to Appendix~\ref{appendix:knn}
for details and the algorithm.

\section{Experimental Results}
\label{sec:experiments}

In this section, we present an extensive experimental evaluation of our approach and provide a benchmark for certified robustness for DNN and KNN classifiers on different datasets. In addition, we consider three different types of backdoor attack patterns, namely one-pixel, four-pixel, and blending-based attacks.
The attack patterns are illustrated in Figure~\ref{fig:backdoor-examples} which shows that these patterns can be hard to spot by a human, in particular for the one-pixel pattern on high-resolution images.
At a high level,
our experiments reveal the following set of observations: \footnote{Our code is available at \url{https://github.com/AI-secure/Robustness-Against-Backdoor-Attacks}}

\begin{itemize}
\item  RAB is able to achieve comparable robustness on benign instances compared with vanilla trained models, and achieves
non-trivial \textit{certified accuracy}
 under a range of realistic backdoor attack settings.
\item There is a gap between
the certified accuracy provided by RAB and
empirical robust accuracy achieved by
the state-of-the-art empirical defenses against backdoor attacks without
formal guarantees, which serves as the upper bound of the certified accuracy; however,
such a gap is reasonably
small and we are optimistic
that future research can
further close this gap.
\item RAB's efficient KNN
algorithm provides
a very effective solution for
tabular data.
\item Simply applying
randomized smoothing to
RAB is not effective and
careful optimizations (e.g., deterministic test-time augmentation)
are necessary.
\end{itemize}

\begin{figure}[t!]
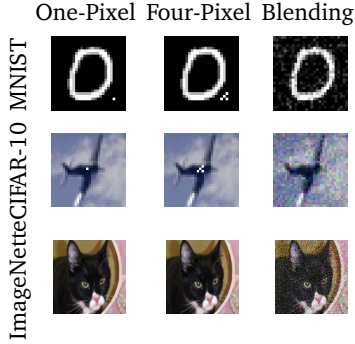

		\centering
		\adjustboxset{height=.05\textheight, valign=c,  margin=0pt 1pt 0pt 1pt}
		\setlength{\tabcolsep}{1pt}
		\begin{tabular}{c c c c}
			& \small One-Pixel & \small Four-Pixel & \small Blending\\
			\parbox[t]{2mm}{\rotatebox[origin=c]{90}{\small MNIST}} & \adjustimage{}{arxiv-figures/trojaneg-mnist-onepixel}& \adjustimage{}{arxiv-figures/trojaneg-mnist-fourpixel}& \adjustimage{}{arxiv-figures/trojaneg-mnist-blending}\\
			\parbox[t]{2mm}{\rotatebox[origin=c]{90}{\small CIFAR-10}} & \adjustimage{}{arxiv-figures/trojaneg-cifar-1p3c-add}& \adjustimage{}{arxiv-figures/trojaneg-cifar-4p3c-add}& \adjustimage{}{arxiv-figures/trojaneg-cifar-blending}\\
			\parbox[t]{2mm}{\rotatebox[origin=c]{90}{\small ImageNette}} & \adjustimage{}{arxiv-figures/trojaneg-dogcat-1p3c-add}& \adjustimage{}{arxiv-figures/trojaneg-dogcat-4p3c-add}& \adjustimage{}{arxiv-figures/trojaneg-dogcat-blending}\\
		\end{tabular}
	\caption{Examples of the applied backdoor patterns.}
	\label{fig:backdoor-examples}
\end{figure}

\begin{table*}
	\caption{Evaluation on \textbf{DNNs} with different datasets. We use $\sigma=0.5$ for MNIST and $\sigma=0.2$ for CIFAR-10 and ImageNette. ``Vanilla" denotes DNNs without RAB training and ``RAB-cert” is the certified accuracy of RAB. The highest empirical robust accuracies are {\bf bolded}. The robust accuracy scores are evaluated only on \emph{successfully backdoored instances}. 
	}
	\label{tab:result-dnn}
	\centering
	\resizebox{\linewidth}{!}{
	\begin{tabular}{l c c c c c c c c c c c c c}
	\toprule
    & \multirow{2}{*}[-2pt]{\makecell{Backdoor \\ Pattern}} & \multicolumn{2}{c}{Acc. on Benign Instances} & \multicolumn{8}{c}{Empirical Robust Acc.} & \textbf{Certified Robust Acc}.\\
    \cmidrule(lr){3-4}\cmidrule(lr){5-12}\cmidrule(lr){13-13}
    & & Vanilla & RAB & Vanilla & RAB & AC~\cite{chen2018detecting} & Spectral~\cite{tran2018spectral} & Sphere~\cite{steinhardt2017certified} & NC~\cite{wang2019neural} & SCAn~\cite{tang2021demon} & Mixup~\cite{borgnia2021strong} & RAB-cert\\
    \midrule
    \multirow{3}{*}{MNIST} & One-pixel & 92.7\% & 92.6\% & 0\% & 41.2\% & 64.3\% & 3.4\% & 3.1\% & {\bf 76.2\%} & 45.6\% & 34.5\% & 23.5\% \\
    & Four-pixel & 92.7\% & 92.6\% & 0\% & 40.7\% & 56.9\% & 2.8\% & 2.1\% & {\bf79.9\%} & 45.4\% & 33.2\% & 24.1\% \\
    & Blending & 92.9\% & 92.6\% & 0\% & 39.6\% & {\bf63.6\%} & 3.0\% & 1.8\% & 63.0\% & 44.7\% & 28.3\% & 23.1\% \\
    \midrule
    \multirow{3}{*}{CIFAR-10} & One-pixel & 59.9\% & 56.7\% & 0\% & {\bf42.9\%} & 31.4\% & 31.2\% & 16.5\% & 15.7\% & 12.9\% & 26.5\% & 24.5\% \\
    & Four-pixel & 59.4\% & 56.8\% & 0\% & {\bf42.8\%} & 28.9\% & 31.4\% & 15.0\% & 16.8\% & 16.5\% & 31.8\% & 24.1\% \\
    & Blending & 60.5\% & 56.8\% & 0\% & {\bf42.8\%} & 27.4\% & 28.0\% & 16.5\% & 16.6\% & 15.8\% & 30.0\% & 24.1\% \\
    \midrule
    \multirow{3}{*}{ImageNette} & One-pixel & 93.0\% & 91.6\% & 0\% & 38.6\% & 44.7\% & 47.8\% & 29.6\% & {\bf69.9\%} & 35.2\% & 55.1\% & 15.9\%\\
    & Four-pixel & 93.7\% & 91.5\% & 0\% & 38.4\% & 54.2\% & 52.8\% & 42.1\% & {\bf67.9\%} & 49.7\% & 51.6\% & 12.6\% \\
    & Blending & 94.8\% & 91.8\% & 0\% & 29.9\% & 46.3\% & 18.4\% & 31.0\% & {\bf 66.7\%} & 33.3\% & 56.3\% & 9.2\% \\
	\bottomrule
	\end{tabular}
	}
\end{table*}
\begin{table*}
	\caption{Evaluation on \textbf{KNNs} with $K=3$ on the  UCI Spambase \textbf{tabular dataset}. We use $\sigma=0.5$ for Spam. ``Vanilla" denotes DNNs without RAB training and ``RAB-cert” is the certified accuracy of RAB. The highest empirical robust accuracies are {\bf bolded}. The robust accuracy scores are evaluated only on \emph{successfully backdoored instances}.}
	\label{tab:result-knn}
	\centering
	\resizebox{\linewidth}{!}{
	\vspace{-0.5em}
	\begin{tabular}{l c c c c c c c c c c c}
		\toprule
		& \multirow{2}{*}[-2pt]{\makecell{Backdoor \\ Pattern}} & \multicolumn{2}{c}{
		Accuracy on Benign Instances} & \multicolumn{6}{c}{
		Empirical Robust Acc.} & \textbf{Certified Robust Acc}.\\
        \cmidrule(lr){3-4}\cmidrule(lr){5-10}\cmidrule(lr){11-11}
		& & Vanilla & RAB & Vanilla & RAB & AC~\cite{chen2018detecting} & Spectral~\cite{tran2018spectral} & Sphere~\cite{steinhardt2017certified} & SCAn~\cite{tang2021demon} & RAB-cert\\
		\midrule
		\multirow{3}{*}{UCI Spambase} & One-pixel & 98.7\% & 98.4\% & 0\% & {\bf54.6\%} & 9.0\% & 9.6\% & 2.4\% & 10.5\% & 36.4\% \\
		& Four-pixel & 98.7\% & 98.4\% & 0\% & {\bf50.0\%} & 9.6\% & 9.6\% & 3.0\% & 11.2\%  & 33.3\%\\
		& Blending & 98.7\% & 98.4\% & 0\% & {\bf58.3\%} & 8.1\% & 8.1\% & 1.7\% & 9.9\% & 41.7\%\\
		\bottomrule
	\end{tabular}
	}
	\vspace{-1.5em}
\end{table*}

\subsection{Experiment Setup}

\label{subsec:experiment-setup}
In this paper, we follow the popular transfer learning setting for poisoning attacks~\cite{shafahi2018poisonfrogs,chaudhuri2019transferable,saha2020hidden,gu2017badnets,meila2021just} in our experiments, specifically ~\cite{schwarzschild2021just}.
We first use models initialized with pretrained weights obtained from a clean dataset, and then finetune the model with a subset of training data containing backdoored instances.
Preliminary experiments and existing work~\cite{wang2018defending} showed that it is difficult to successfully inject backdoors if only a subset of parameters is finetuned. As a result, we always finetune the entire set of model parameters.

\subsubsection{Datasets and Model}
We consider four different datasets, namely the MNIST dataset~\cite{lecun1998gradient} consisting of 60,000 images of handwritten digits from $0$-$9$, the CIFAR-10 dataset\cite{krizhevsky2009learning} which includes 50,000 images of $10$ different classes of natural objects such as horse, airplane, automobile, etc. Furthermore, we perform evaluations on the high-resolution ImageNette dataset~\cite{imagenette} which is a $10$-class subset of the original large-scale ImageNet dataset~\cite{deng2009imagenet}. Finally, we evaluate the $K$-NN model on a tabular dataset, namely the UCI Spambase dataset~\cite{Dua:2019},
which consists of bag-of-words feature vectors on E-mails and determines whether the message is spam or not. The dataset contains 4,601 data cases, each of which is a 57-dimensional input.
We use $0.1\%$ of the MNIST and CIFAR-10 training data to finetune our models; on ImageNette and Spambase, we use 1\% for finetuning.
For evaluations on DNNs, we choose the CNN architecture from~\cite{gu2019badnets} on MNIST and the ResNet used in~\cite{cohen2019certified} on CIFAR-10, whereas for ImageNette, we use the standard ResNet-18~\cite{he2016deep} architecture.

\subsubsection{Training Protocol}
We set the number of sampled noise vectors (i.e. augmented datasets) to $N=1,000$ on MNIST and CIFAR, and $N=200$ on ImageNette, leading to an ensemble of $1,000$ and $200$ models, respectively.
The added smoothing noise is sampled from the Gaussian distribution with location parameter $\mu=0$ and scale $\sigma=0.5$ for MNIST and Spambase. For CIFAR-10 and ImageNette we use $\mu=0$ and set the scale to $\sigma=0.2$.
The impact of different $\sigma$ is shown in Section~\ref{sec:exp-acc-radius}.
The confidence intervals for the binomial distribution are calculated with an error rate of $\alpha=0.001$.
For the KNN models, we use $K=3$ neighbors and set the number of similarity levels to $L=200$, meaning that the similarity scores according to euclidean distance are quantized into 200 distinct levels.

\subsubsection{Baselines of Empirical Backdoor Removal Based Defenses}
\label{sec:baseline}

Since this is the first paper providing rigorous certified robustness against backdoor attacks, there is no baseline that allows a comparison of the certified accuracy. We remark that a technical report~\cite{wang2020certifying} directly applies the randomized smoothing technique to certify robustness against backdoors without evaluation or analysis.
However, as we will show in Section~\ref{sec:test-time augmentation}, directly applying randomized smoothing
without deterministic test-time augmentation does not provide high certified robustness.
We will, on the other hand, compare our empirical robust accuracy with the state-of-the-art empirical defenses. We briefly review these defenses in the following.

{\bf Activation clustering (AC)}~\cite{chen2018detecting} extracts the activation of the last hidden layer of a trained model and uses clustering analysis to remove training instances with anomalies. We use the default parameter setting provided in the Adversarial Robustness Toolbox (ART)~\cite{nicolae2018adversarial}.
{\bf Spectral Signature (Spectral)}~\cite{tran2018spectral} uses matrix decomposition on the feature representations to detect and remove training instances with anomalies. We again use the default parameter setting provided in ART.
{\bf Sphere}~\cite{steinhardt2017certified} performs dimensionality reduction and removes instances with anomalies in the lower dimensions. The top-15\% anomaly instances are removed.
{\bf Neural Cleanse (NC)}~\cite{wang2019neural} first reverse-engineers a ``pseudo-trigger'' for each class. Then, to detect and remove anomaly instances, the distances between each instance with and without the pseudo-trigger are compared, and the most similar ones are recognized as anomaly instances. We use pixel-level distance as the distance metric, 100 epochs for trigger generation, and an initial $\lambda=0.01$ for MNIST and $\lambda=0.0001$ for CIFAR and ImageNette.
{\bf Statistical Contamination Analyzer (SCAn)}~\cite{tang2021demon} first performs an EM algorithm to decompose two subgroups over a small clean dataset. Then, for each class in the train set, the parameters of a mixture model for all the data are estimated, before we calculate the likelihood for anomaly detection. To identify the backdoored instances, we recognize the smaller set in the most anomalous class as the backdoored instances.
{\bf Mixup}~\cite{borgnia2021strong}, following the data augmentation technique in the paper, we use a 4-way mixup training algorithm to train the model over the train set. The convex coefficients are drawn from a Dirichlet distribution with $\alpha=1.0$.

The initial goal of all these approaches, with the exception of Mixup, is to \textbf{detect} backdoored instances, i.e., to determine whether there exists a trigger. To apply them as a defense (i.e., to train a clean model despite the existence of backdoored data), we make adaptations either following the original paper (AC, Spectral, Sphere and NC) or by our design (SCAn) so that we remove training data with anomalies detected by these approaches and retrain a clean model. Some detection cannot be adapted to the defense task, such as~\cite{xu2021detecting}, and are not included in the comparison.

\subsubsection{Evaluation Metrics}
We evaluate the model accuracy trained on the backdoored dataset with vanilla training and RAB training strategies.
In particular, we evaluate both the model performance on benign instances (benign accuracy) and backdoored instances for which the attack was successful against the vanilla model (empirical robust accuracy). With RAB, we are also able to calculate the \textbf{certified accuracy}, which means that the RAB model not only certifies that the prediction is the same as if it were trained on the clean dataset, but also that the prediction is equal to the ground truth. The certified accuracy is defined below.
\begin{equation}
    \mathrm{Certified\, Acc.} = \frac{1}{n}\abs{\{x_i\colon R_i > \|\delta\|_2 \land \hat{y}_i = y_i\}}
\end{equation}
where $R_i$ is the robus radius according to Eq.~\eqref{eq:gaussian_bound}, $\hat{y}_i$ is the predicted label, and $y_i$ is the ground truth for input $x_i$.

We emphasize that we only evaluate the backdoored test instances for which the attack is successful against the vanilla trained models, which is why the vanilla models always have 0\% empirical robust accuracy on these backdoored instances in Table~\ref{tab:result-dnn}. This is to evaluate against the effective backdoor attacks and better illustrate the comparison between RAB-trained models with vanilla and baseline backdoor defense models (empirical robust accuracy).
Such empirical robust accuracy of different methods serves as an upper bound for the certified accuracy.

\subsubsection{Backdoor Patterns}
We evaluate RAB against three representative backdoor attacks, namely a one-pixel pattern in the middle of the image, a four-pixel pattern, and blending a random, but fixed, noise pattern to the entire image~\cite{chen2017targeted}.
We visualize all backdoor patterns on different datasets in Fig.~\ref{fig:backdoor-examples}.
We control the perturbation magnitude of the attack via the $L_2$-norm of the backdoor patterns, setting $\|\delta\|_2=0.1$ for all attacks where $\delta$ is the backdoor pattern.
On MNIST, we inject 10\% backdoored instances and 5\% for CIFAR and ImageNette respectively.
If not described differently, the attack goal is to fool the model into predicting ``0'' on MNIST, ``airplane'' on CIFAR and ``tench'' on ImageNette.
In Appendix~\ref{sec:exp-a2a}, we also consider an all-to-all attack goal~\cite{gu2019badnets} so that the fooled model will change its prediction conditioned on the original label.

It is possible to use different backdoor patterns via optimization and other approaches.
However, since our goal is to provide \textit{certified} robustness against backdoor attacks, a task that is by definition agnostic to the specific backdoor pattern but only depends on the magnitude of the pattern and the number of backdoored training instances, we mainly focus on these representative backdoor patterns.
In addition, we only evaluate the backdoor attack to poison the dataset, while other attacks that interfere with the training process are not evaluated~\cite{ren2021simtrojan}, as RAB is a robust training pipeline against training data manipulation based poisoning attacks.

\subsection{Certified Robustness of DNNs against Backdoor Attacks}

In this section we evaluate RAB against backdoor attacks on different models and datasets. We present both the certified robust accuracy of RAB, as well we the empirical robust accuracy comparison between RAB and baseline defenses. Furthermore, we also present several ablation studies to further explore the properties of RAB.

\subsubsection{Certified Robustness with RAB}
We first evaluate the certified robustness of RAB on DNNs against different backdoor patterns on different datasets.
We also present the performance of RAB on benign instances and backdoored instances empirically. Table~\ref{tab:result-dnn} lists the benchmark results on MNIST, CIFAR-10, and ImageNette, respectively.
From the results, we can see that RAB achieves significantly non-trivial certified robust accuracy against backdoor attacks at a negligible cost of benign accuracy; while there are no certified results for any other method. The slight drop in benign accuracy results from training on noisy instances. However, this loss in benign accuracy is less than 3\% in most cases and is clearly outweighed by the achieved certified robust accuracy.
In particular, RAB achieves over 23\% \textit{certified accuracy} on the backdoored instances for MNIST and CIFAR-10, and around 12\% for ImageNette.
In other words, we can successfully certify for these instances that our model predicts the same result as if it were trained on the clean training set.
We run the experiment multiple times and show in Appendix~\ref{sec:exp-multi-run} that the standard deviation is less than 1\% in most cases. We also show the abstain rate of  certification in Appendix~
\ref{sec:exp-abs-rate} and observe that it is generally low.
If the abstain rate is high, we can perform the similar way as in Cohen et al.~\cite{cohen2019certified} to obtain a variation of our theorem to certify the radius by some margin.

\subsubsection{Empirical Robustness: without RAB vs. with RAB}
In addition to the certificates that RAB can provide,
RAB's training process also provides good
robustness accuracy \textit{empirically}, without
theoretical guarantees.

In Table~\ref{tab:result-dnn}, the
``RAB" column reports the empirical
robust accuracy --- \textit{how often can
a malicious input that successfully
attacks a vanilla model
trick RAB?} We can see that,
RAB achieves
high empirical
robust accuracy, and such empirical robust accuracy achieved by either RAB or other methods serves as an upper bound for the certified robust accuracy provided by RAB under the ``RAB-certified" column.
It is shown that RAB achieves around 40\% empirical robust accuracy on the backdoored instances for MNIST and CIFAR-10, and over 30\% for ImageNette.
In Appendix~\ref{sec:exp-adv-atk}, we also try an empirical adversarial attack on the RAB model and observe a similar behavior as on vanilla models.

\subsubsection{Comparison with State-of-the-art Empirical Backdoor Defenses}
\label{subsubsec:empirical-comparison}
Another line of research is to develop empirical methods to automatically detect and remove backdoored training instances.
{\em How does RAB compare with these methods?}
We empirically compare the robustness of RAB with other  state-of-the-art baseline methods introduced in Section~\ref{sec:baseline}, as shown in Table~\ref{tab:result-dnn}.
we observe that although RAB is not specifically designed for empirical defense, it achieves comparable empirical robust accuracy compared with these baseline methods.
RAB outperforms about half of the baselines methods on MNIST and ImageNette and all the baselines on CIFAR-10. Interestingly, our approach performs better on CIFAR-10 than on other tasks while other baselines usually perform badly on  CIFAR-10.
We attribute this observation to the fact that the benign accuracy on CIFAR-10 is comparably low, so that the baselines based on analyzing feature representations or on model reverse engineering are largely affected and the performance is thus worse.
By comparison, RAB only needs to add noise to smooth the training process without analyzing model properties, and is hence less affected by the model viability (similarly, the performance of Mixup is less affected too).

In addition, in Appendix~\ref{sec:exp-a2a}, we additionally evaluate the defenses against a more challenging \textit{all-to-all attack} where many baseline approaches fail, and RAB still achieves good performance.
We also show that our approach can be applied to an SVM model for three tabular datasets in Appendix~\ref{sec:exp-svm}, while existing approaches cannot work well since there is no distinct ``activation layer'' in a simple SVM model.
Furthermore, for very large attack perturbations, the certification will fail as  shown in Appendix~\ref{sec:exp-large-pert}; however, RAB still achieves non-trivial empirical robustness.

\begin{figure}
    \centering
    \begin{subfigure}[b]{.4\linewidth}
    \centering
    \includegraphics[width=\linewidth]{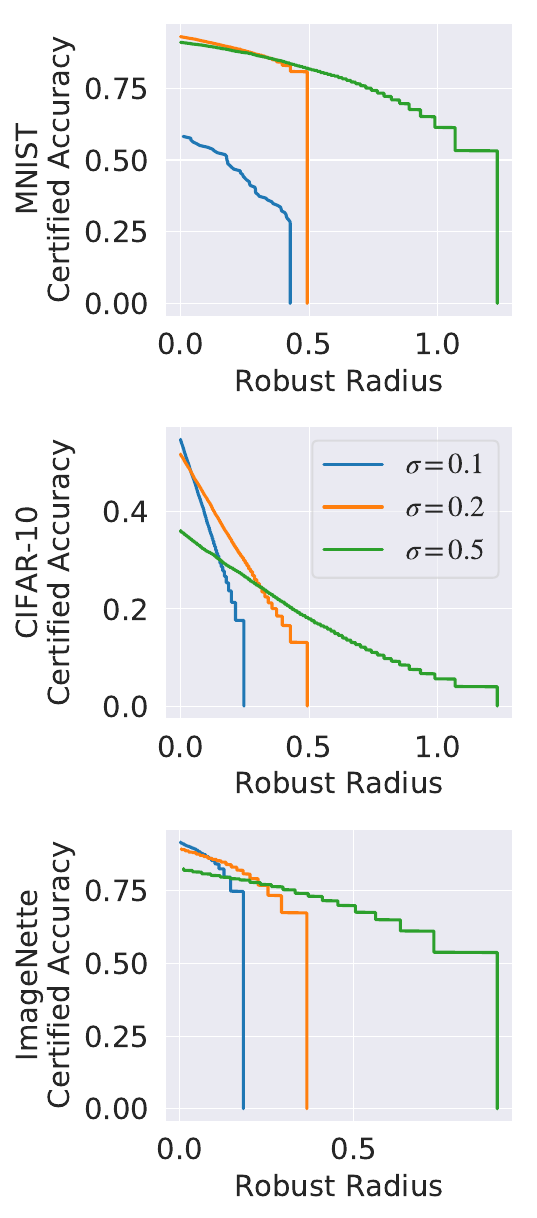}
    \caption{DNN}
    \label{fig:dnn-robust-accuracy}
    \end{subfigure}%
    \hfill
    \begin{subfigure}[b]{.4\linewidth}
    \centering
    \includegraphics[width=\linewidth]{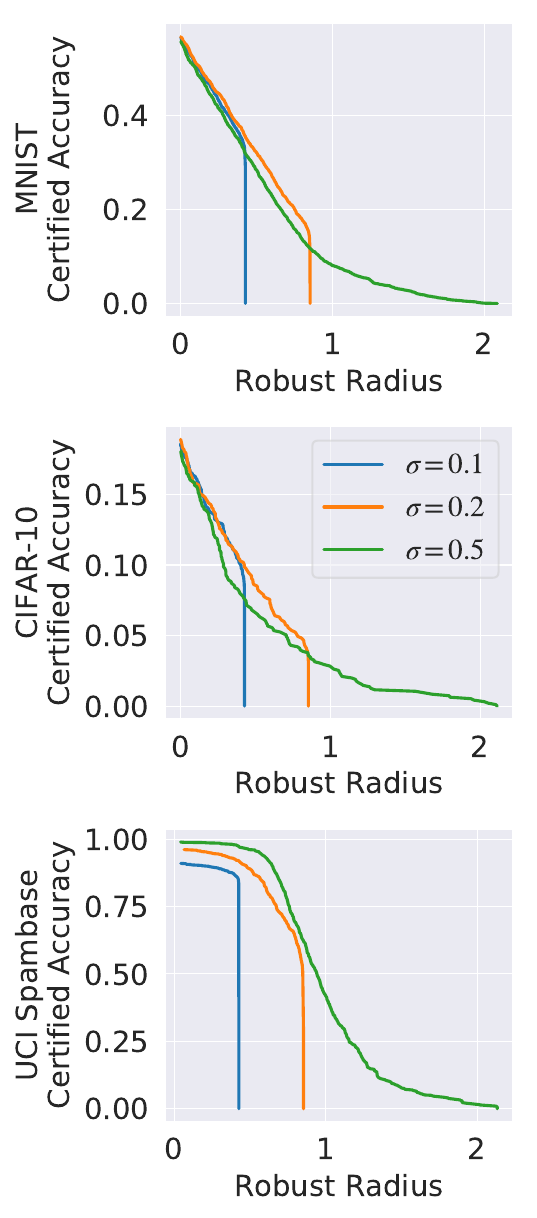}
    \caption{KNN}
    \label{fig:knn-robust-accuracy}
    \end{subfigure}%
    \caption{ Certified accuracy of DNN and KNN at different radii with different smoothing parameters $\sigma$ against blending attack. }
    \label{fig:result-radii}
\end{figure}

\subsubsection{Certified Accuracy Under different  Radii}
\label{sec:exp-acc-radius}
We further discuss how different certified radii affect the certified accuracy. In Fig.~\ref{fig:result-radii}, we present the certified accuracy as a function of the robust radius given different values for the smoothing parameter $\sigma$ against blending attack. The conclusions on other backdoor patterns are similar.

In the figures, we plot the certified accuracy of all test cases (instead of only on successfully attacked cases) so that the overall trend can be seen.
We can see that the certified accuracy decreases with increased radii and, at a certain point, it suddenly goes to zero, which aligns with existing observations on certified robustness against evasion attacks~\cite{cohen2019certified}. Furthermore, stronger noise harms the certified accuracy at a small radius, while improving it at a larger robust radius.
It is thus essential to choose an appropriate smoothing noise magnitude according to the task. The certified accuracy of KNN is comparatively low due to its simple structure, but it achieves non-trivial certified accuracy at a larger radius as we do not need Monte Carlo sampling which would result in a finite sampling error that decreases the certified robustness.

\begin{figure}
    \centering
    \includegraphics[width=0.7\linewidth]{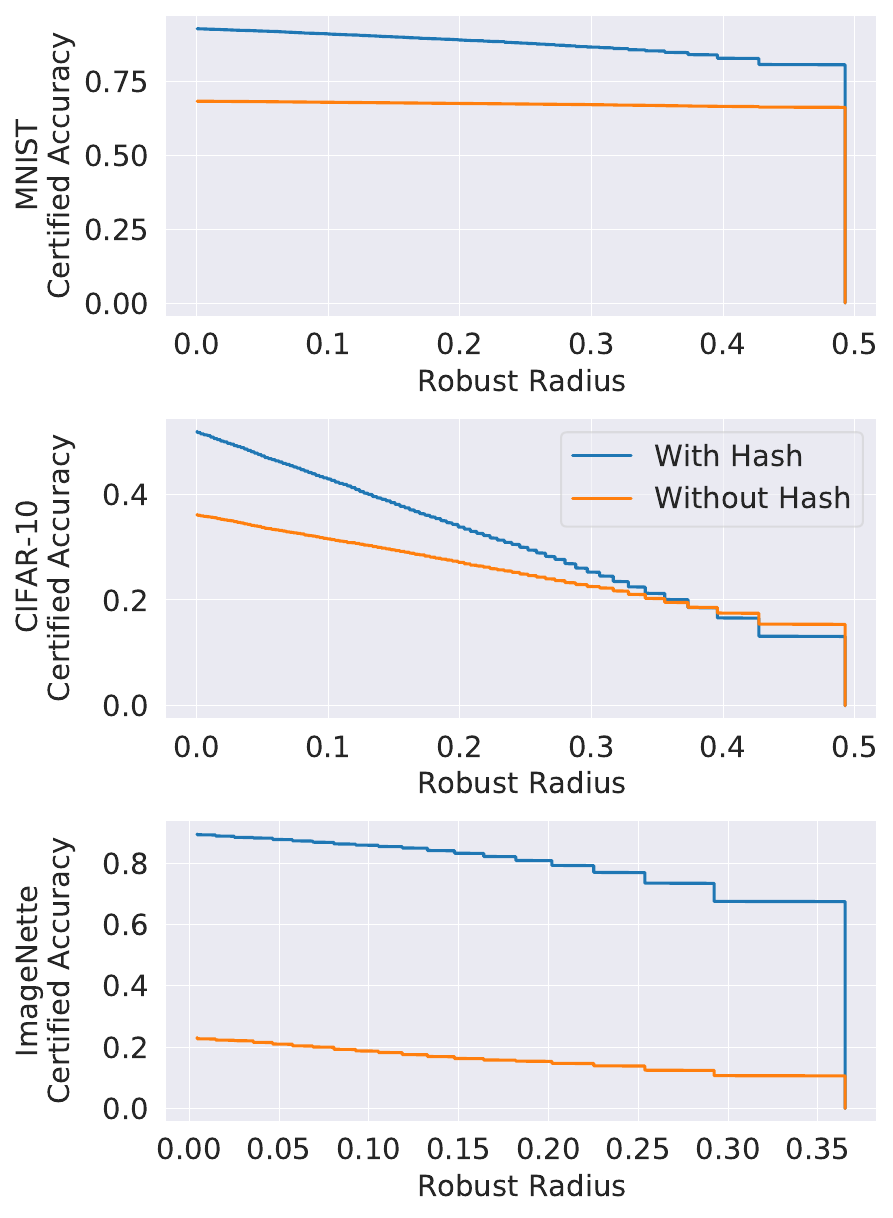}
    \caption{Comparison of the certified accuracy at different radii with and without the proposed deterministic test-time augmentation. The accuracy is evaluated against blending attack with smoothing parameter $\sigma=0.2$.}
    \label{fig:augmentation}
\end{figure}

\subsubsection{Ablation Study: Impact of Deterministic Test-time Augmentation}
\label{sec:test-time augmentation}
We compare the certification accuracy of RAB with and without deterministic test-time augmentation in Figure~\ref{fig:augmentation}.
We plot the certified accuracy of all test cases instead of only on successfully attacked cases to show the comparison on the entire dataset.
We observe that the certified accuracy significantly improves with the proposed hash function based deterministic test-time augmentation, especially at small certification radii and with a particularly large gap on ImageNette dataset
--- without the augmentation,  the certified accuracy is only around 20\%, while it increases to around 80\% with the augmentation.
This shows that it is important to include the test-time augmentation during inference, and directly adopting randomized smoothing may not provide satisfactory certified accuracy.
The detailed empirical and certified robust accuracies are shown in Appendix~\ref{sec:exp-no-aug}.


\begin{figure}
    \centering
    \includegraphics[width=0.5\linewidth]{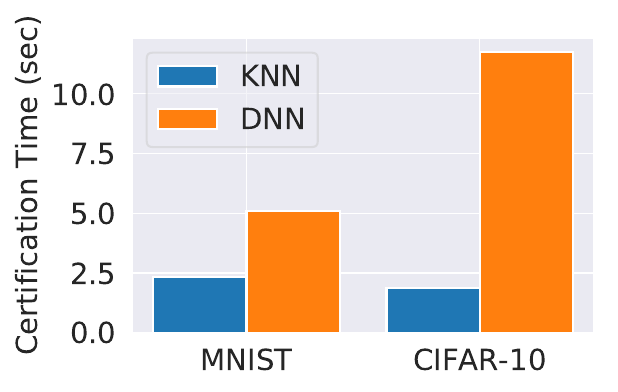}
    \caption{Runtime comparison for certifying one input.}
    \label{fig:speed}
\end{figure}

\subsection{Certified Robustness of KNN Models}

\label{subsec:certified-knn}

Here we present the benchmarks based on our proposed efficient algorithm for KNN models.
We perform experiments on the UCI spambase tabular datasets and show the results for K=3 in Table~\ref{tab:result-knn}.
The NC baseline relies on gradient-based reverse engineering, while Mixup relies on mixing label information during training, so these two methods
are not included here.
The other baselines use intermediate feature vectors in DNN models, which do not exist in KNN models.
Therefore, we use the output prediction vector as the feature vector.
From the results, we see that for KNN models, RAB achieves good performance for both empirical and certified robustness and outperforms all the baselines, indicating its advantages for specific domains.

This comparison might seem unfair at first glance, since the considered baselines are based on deep feature representations, which are absent in the KNN case.
However, firstly, we emphasize that none of the approaches, including RAB, use deep features for this comparison and have hence access to the same amount of information.
Secondly, this comparison reveals an important property of our approach: while the baselines struggle to handle ML models beyond DNN, RAB is applicable to a wider range of models and still yields non-trivial empirical and certified robust accuracy.
To enable a comparison for KNN models which is more favorable to the baselines, we consider kernel KNN with a CNN as the kernel function. From the table in Appendix~\ref{sec:exp-kernel-knn}, we see that for this scenario, some baselines indeed outperform RAB.

Figure~\ref{fig:speed} illustrates the runtime
of the exact algorithm for KNN vs. the sampling-based
method of DNN.
We observe that for certifying one input on KNN with $K=3$ neighbors, using the proposed \textit{exact} certification algorithm takes only 2.5 seconds, which is around 2-3 times faster than the vanilla RAB on MNIST and 6-7 times faster on CIFAR-10. In addition, the runtime is agnostic to the input size but related to the size of the training set. It would be interesting future work to design similar efficient certification algorithms for DNNs.
Nevertheless, the KNN algorithm remains slower than the algorithm without certification (which is 1000 times faster than the RAB DNN pipeline), and the improvement of running time is an important future direction.

\section{Related Work}
\label{sec:related_work}

In this section, we discuss current backdoor (poisoning) attacks on machine learning models and existing defenses.

\noindent\textbf{Backdoor attacks }
There have been several works developing optimal poisoning attacks against machine learning models such as SVM and logistic regression~\cite{biggio2012poisoning,li2016data}. Furthermore,~\cite{munoz2017towards} proposes a similar optimization-based poisoning attack against neural networks that can only be applied to shallow MLP models. In addition to these optimization-based poisoning attacks, the backdoor attacks are shown to be very effective against deep neural networks~\cite{chen2017targeted,gu2019badnets}. The backdoor patterns can be either static or generated dynamically~\cite{yang2017generative}. Static backdoor patterns can be as small as one pixel, or as large as an entire image~\cite{chen2017targeted}.

\noindent\textbf{Empirical defenses against backdoor attacks }
Given the potentially severe consequences caused by backdoor attacks, multiple defense approaches have been proposed.
NeuralCleanse~\cite{wang2019neural} proposes to detect the backdoored models based on the observation that there exists a ``short path'' to make an instance to be predicted as a malicious one.
\cite{chen2019deepinspect} improves upon the approach by using model inversion to obtain training data, and then applying GANs to generate the ``short path'' and apply anomaly detection algorithm as in Neural Cleanse. Activation Clustering~\cite{chen2018detecting} leverages the activation vectors from the backdoored model as features to detect backdoor instances.
Spectral Signature~\cite{tran2018spectral} identifies the ``spectral signature'' in the activation vector for backdoored instances.
STRIP~\cite{gao2019strip} proposes to identify the backdoor instances by checking whether the model will still provide a confident answer when it sees the backdoor pattern.
SentiNet~\cite{chou2018sentinet} leverages computer vision techniques to search for the parts in the image that contribute the most to the model output, which are very likely to be the backdoor pattern.
In~\cite{ma2019datapoisoning}, differential privacy has been leveraged as a defense against poisoning attacks. Note that RAB can not guarantee that the trained models are differentially private, although both aim to decrease the model sensitivity intuitively.
A further empirical defense against backdoor attacks is proposed in~\cite{hayase2021spectre} using covariance estimation with the aim of amplifying the spectral signature of backdoored instances.

\textbf{Certified Defenses against poisoning attacks }
Another interesting application of randomized smoothing is presented in~\cite{rosenfeld2020certified} to certify the robustness against label-flipping attacks and randomize the entire training procedure of the classifier by randomly flipping labels in the training set. This work is orthogonal to ours in that we investigate the robustness with respect to perturbations on the training inputs rather than labels.
In a further line of work on provable defenses against poisoning attacks,~\cite{levine2021deep} proposes an ensemble method, deep partition aggregation (DPA). Similar to our work, DPA is related to randomized smoothing, however, in contrast to our work, the goal is to certify the number of poisoned instances for which the prediction remains unaffected.
Similarly,~\cite{jia2021intrinsic} use an ensemble technique to certify robustness against poisoning attacks. This is also orthogonal to ours as it certifies the number of poisoned instances, rather than the trigger size.
The same certification goal is considered in~\cite{jia2022certified}, but is restricted to nearest neighbor algorithms and derives an intrinsic certificate by viewing them as ensemble methods.
In addition to these works aiming to certify the robustness of a single model, \cite{yang2020end} provides a new way to certify the robustness of an end-to-end sensing-reasoning pipeline.
Finally,~\cite{xie2021crfl} propose a technique to certify robustness against backdoor attacks within the federated learning framework by controlling the global model smoothness.
Furthermore, a technical report also proposes to directly apply the randomized smoothing technique to certify robustness against backdoor attacks without any evaluation or analysis~\cite{wang2020certifying}.
In addition, as we have shown, directly applying randomized smoothing will not provide high certified robustness bounds. Contrary to that, in this paper, we first provide a unified framework based on randomized smoothing, and then propose the RAB robust training process to provide certified robustness against backdoor attacks based on the framework. We provide the tightness analysis for the robustness bound, analyze different smoothing distributions, and propose the hash function-based model deterministic test-time augmentation approach to achieve good certified robustness. In addition, we analyze different machine learning models with corresponding properties such as model smoothness to provide guidance to further improve the certified robustness.

\section{Limitations}
\label{sec:limitations}
One major limitation of RAB is that it introduces non-negligible runtime overhead. To certify the robustness, we need to train and evaluate multiple models (here, 1000 for MNIST/CIFAR-10 and 200 for ImageNette), which is expensive despite the fact that it is parallelizable and can be speeded up with multiple GPUs.
Nevertheless, with our polynomial-time KNN algorithm, we have shown a first step towards mitigating the computational cost and leave further endeavors in this direction as future work.

Another limitation is the defender's knowledge of the attack. Indeed, to \emph{certify} the robustness, the defender needs to
know
1) an upper bound on the backdoor trigger magnitude (in terms of an $L_p$ norm), 2) an upper bound on the number of poisoned training instances, and, 3) control over the training process.
However, to use RAB only as a defense (i.e. without any certificate), the defender only needs to control the training process while 1) and 2) are not needed.
The assumption 3) restricts RAB to be a robust training algorithm given an untrusted dataset. In other words, RAB cannot be used to defend against backdoor attacks that interfere with the training process (e.g., \cite{ren2021simtrojan}).

\section{Discussion and Conclusion}
\label{sec:conclusions}
In this paper, we aim to propose a unified smoothing framework to certify the model robustness against different attacks. In particular, towards the popular backdoor poisoning attacks, we propose the first robust smoothing pipeline RAB as well as a \emph{model deterministic test-time augmentation} mechanism to certify the prediction robustness against diverse backdoor attacks. In particular, we evaluate our certified robustness against backdoors on DNNs and KNN models. In addition, we propose an \textit{exact} algorithm for KNN models without requiring to sample from the smoothing distributions. We provide comprehensive benchmarks of certified model robustness against backdoors on diverse datasets, which we believe will provide the \emph{first set} of certified robustness against backdoor attacks for future work to compare with, and hopefully our results and analysis will inspire a new line of research on tighter certified accuracy against backdoor attacks.

\section*{Acknowledgement}
This work is partially supported by NSF grant No.1910100, NSF CNS 2046726, C3 AI, and the Alfred P. Sloan Foundation.
CZ and the DS3Lab gratefully acknowledge the support from the Swiss State Secretariat for Education, Research and Innovation (SERI) under contract number MB22.00036 (for European Research Council (ERC) Starting Grant TRIDENT 101042665), the Swiss National Science Foundation (Project Number 200021\_184628, and 197485), Innosuisse/SNF BRIDGE Discovery (Project Number 40B2-0\_187132), European Union Horizon 2020 Research and Innovation Programme (DAPHNE, 957407), Botnar Research Centre for Child Health, Swiss Data Science Center, Alibaba, Cisco, eBay, Google Focused Research Awards, Kuaishou Inc., Oracle Labs, Zurich Insurance, and the Department of Computer Science at ETH Zurich.

\bibliographystyle{IEEEtranS}
\bibliography{IEEEabrv,main}
\begin{appendices}

\renewcommand*{\thelem}{\Alph{section}.\arabic{lem}}
\renewcommand*{\thecor}{\Alph{section}.\arabic{cor}}
\renewcommand*{\thedefn}{\Alph{section}.\arabic{defn}}
\renewcommand*{\therem}{\Alph{section}.\arabic{rem}}
\renewcommand{\thetable}{\Alph{section}.\arabic{table}}

\section{Proofs}
Here we provide the proofs for the results stated in the main part of the paper. We write $\alpha(\phi) = \alpha(\phi;\,\Prob_0)$ and $\beta(\phi) = \beta(\phi;\,\Prob_0,\,\Prob_1)$ for type-I and -II error probabilities.
\subsection{Proof of Theorem~\ref{thm:main}}
\label{appendix:main_theorem_proof}
{\it Preliminaries and Auxiliary Lemmas:}
Central to our theoretical results are likelihood ratio tests which are statistical hypothesis tests for testing whether a sample $x$ originates from a distribution $X_0$ or $X_1$. These tests are defined as
\begin{align}
\small
    \phi(x) =
        \begin{cases}
            1 &\mathrm{if}\, \Lambda(x) > t,\\
            q &\mathrm{if}\, \Lambda(x) = t,\\
            0 &\mathrm{if}\, \Lambda(x) < t.\\
        \end{cases}
    \hspace{0.6em}
    \text{with}
    \hspace{0.6em}
    \Lambda(x) = \frac{f_{X_1}(x)}{f_{X_0}(x)},\label{eq:likelihood_ratio_test}
\end{align}
where $q$ and $t$ are chosen such that $\phi$ has significance $\alpha_0$, i.e.
    $\alpha(\phi)
    =\Prob_0\left(\Lambda(X) > t\right) + q\cdot \Prob_0(\Lambda(X) = t) = \alpha_0.$
\begin{lem}
    \label{lem:sandwich}
    Let $X_0$ and $X_1$ be two random variables with densities $f_0$ and $f_1$ with respect to a measure $\mu$ and denote by $\Lambda$ the likelihood ratio $\Lambda(x) = f_1(x) / f_0(x)$. For $p\in[0,\,1]$ let $t_p:=\inf\{t\geq 0\colon \, \Prob(\Lambda(X_0) \leq t) \geq p\}$. Then it holds that
    \begin{equation}
        \label{eq:sandwich}
            \small
        \Prob\left(\Lambda(X_0) < t_p\right) \leq p \leq \Prob(\Lambda(X_0) \leq t_p).
    \end{equation}
\end{lem}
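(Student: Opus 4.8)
The plan is to recognize that $t_p$ is precisely the $p$-quantile (generalized inverse of the distribution function) of the nonnegative random variable $\Lambda(X_0)$, so that~\eqref{eq:sandwich} is the familiar ``sandwich'' characterization of quantiles. Writing $F(t) := \Prob(\Lambda(X_0) \leq t)$, I would first record two preliminary facts. Since $f_0$ is a density with respect to $\mu$, we have $\Prob(f_0(X_0) = 0) = \int_{\{f_0 = 0\}} f_0\, d\mu = 0$, so $\Lambda(X_0) = f_1(X_0)/f_0(X_0)$ is a well-defined, almost surely finite, nonnegative random variable, and $F$ is a bona fide distribution function: nondecreasing, right-continuous, vanishing on $(-\infty, 0)$, with $F(t) \to 1$ as $t \to \infty$. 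By definition, $t_p = \inf\{t \geq 0 : F(t) \geq p\}$.

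For the right-hand inequality $p \leq F(t_p) = \Prob(\Lambda(X_0) \leq t_p)$, I would split on whether the set $S_p := \{t \geq 0 : F(t) \geq p\}$ is empty. Emptiness can occur only when $p = 1$, in which case $t_p = +\infty$ and the claim degenerates to $\Prob(\Lambda(X_0) < \infty) \leq 1 \leq \Prob(\Lambda(X_0) \leq \infty)$, which is true by almost sure finiteness. Otherwise $S_p$ is nonempty with finite infimum $t_p$, so I may choose a sequence $t_n \downarrow t_p$ with $F(t_n) \geq p$ for every $n$; right-continuity of $F$ then gives $F(t_p) = \lim_n F(t_n) \geq p$.

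For the left-hand inequality $\Prob(\Lambda(X_0) < t_p) \leq p$, the case $t_p = 0$ is immediate because $\Lambda(X_0) \geq 0$ almost surely, so $\Prob(\Lambda(X_0) < 0) = 0 \leq p$; the case $t_p = +\infty$ is immediate as above. When $0 < t_p < \infty$, the definition of the infimum forces $F(t) < p$ for every $t \in [0, t_p)$, and since $\{\Lambda(X_0) < t_p\} = \bigcup_{0 \leq t < t_p}\{\Lambda(X_0) \leq t\}$ is an increasing union, continuity from below of $\Prob$ yields $\Prob(\Lambda(X_0) < t_p) = \lim_{t \uparrow t_p} F(t) \leq p$, which finishes the proof.

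I do not anticipate a genuine obstacle here: this is the textbook quantile sandwich, and the only care required is the measure-theoretic bookkeeping — verifying that $\Lambda(X_0)$ is almost surely finite so that $F$ is a proper CDF, dealing with the degenerate cases $p \in \{0,1\}$ (equivalently $t_p \in \{0, +\infty\}$), and correctly invoking the right-continuity of $F$ and the continuity from below of $\Prob$ for the two monotone limits.
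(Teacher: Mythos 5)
Your proof is correct and follows essentially the same route as the paper's: right-continuity of $t\mapsto\Prob(\Lambda(X_0)\leq t)$ gives the right-hand inequality, and continuity from below applied to the increasing union exhausting $\{\Lambda(X_0)<t_p\}$, together with $F(t)<p$ for $t<t_p$, gives the left-hand one. The only differences are cosmetic — you invoke right-continuity of the CDF as a standard fact where the paper re-derives it from monotone set limits, and you add some welcome bookkeeping for the degenerate cases $t_p\in\{0,+\infty\}$ that the paper glosses over.
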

\begin{proof}
    We first show the RHS of inequality~(\ref{eq:sandwich}). This follows directly from the definition of $t_p$ if we show that the function $t\mapsto \Prob(\Lambda(X_0) \leq t)$ is right-continuous. Let $t\geq 0$ and let $\{t_n\}_n$ be a sequence in $\bR_{\geq 0}$ such that $t_n \downarrow t$. Define the sets $A_n:=\{x\colon\Lambda(x) \leq t_n\}$ and note that $\Prob(\Lambda(X_0) \leq t_n) = \Prob(X_0 \in A_n)$. Clearly, if $x\in \{x\colon\Lambda(x) \leq t\}$ then $\forall n\colon \Lambda(x) \leq t \leq t_n$ and thus $x\in\cap_n\,A_n$. If on the other hand $x\in\cap_n\,A_n$ then $\forall n\colon \Lambda(x) \leq t_n \to t$ as $n\to\infty$.
    Hence, we have that $\cap_n\, A_n = \{x\colon\Lambda(x) \leq t\}$ and thus $\lim_{n\to\infty}\Prob\left(\Lambda(X_0) \leq t_n\right) = \Prob\left(\Lambda(X_0) \leq t\right)$
    since $\lim_{n\to\infty}\Prob\left(X_0 \in A_n\right) = \Prob(X_0\in\cap_n A_n)$ for $A_{n+1} \subseteq A_{n}$.
    We conclude that $t\mapsto \Prob\left(\Lambda(X_0) \leq t\right)$ is right-continuous and in particular $\Prob\left(\Lambda(X_0) \leq t_p\right)\geq p$.
    We now show the LHS of inequality~(\ref{eq:sandwich}). For that purpose, consider the set $B_n:=\{x\colon\Lambda(x) < t_p - \nicefrac{1}{n}\}$ and let $B:=\{x\colon\Lambda(x) < t_p\}$. Clearly, if $x\in \cup_n B_n$, then $\exists n$ such that $\Lambda(x) < t_p - \nicefrac{1}{n} < t_p$ and hence $x\in B$. If on the other hand $x\in B$, then we can choose $n$ large enough such that $\Lambda(x) < t_p - \nicefrac{1}{n}$ and thus $x\in\cup_n B_n$. It follows that $B = \cup_n B_n$. Furthermore, by the definition of $t_p$ and since for any $n\in\bN$ we have that $\Prob\left(X_0 \in B_n\right) = \Prob\left(\Lambda(X_0) < t_p - \nicefrac{1}{n}\right) < p$ it follows that $\Prob\left(\Lambda(X_0) < t_p\right) = \lim_{n\to\infty}\Prob\left(X_0 \in B_n\right) \leq p$
    since $B_n \subseteq B_{n+1}$. This concludes the proof.
\end{proof}
\begin{lem}
    \label{lem:np_aux}
    Let $X_0$ and $X_1$ be random variables taking values in $\cZ$ and with probability density functions $f_0$ and $f_1$ with respect to a measure $\mu$. Let $\phi^*$ be a likelihood ratio test for testing the null $X_0$ against the alternative $X_1$. Then for any deterministic function $\phi\colon\cZ\to [0,1]$ the following implications hold:
    \begin{enumerate}
        \item[i)] $\alpha(\phi) \geq 1 - \alpha(\phi^*) \Rightarrow 1 - \beta(\phi) \geq \beta(\phi^*)$
        \item[ii)] $\alpha(\phi) \leq \alpha(\phi^*) \Rightarrow \beta(\phi) \geq \beta(\phi^*)$
    \end{enumerate}
\end{lem}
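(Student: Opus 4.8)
The plan is to prove part (ii)---which is the classical Neyman--Pearson inequality in the randomized-test formulation---and then to deduce part (i) for free by applying (ii) to the complementary test $1-\phi$. Throughout I would use that for any measurable $\psi\colon\cZ\to[0,1]$ one has $\alpha(\psi)=\bE_0(\psi(X))=\int\psi f_0\,d\mu$ and $1-\beta(\psi)=\bE_1(\psi(X))=\int\psi f_1\,d\mu$, and I would write $\phi^*$ in its explicit threshold form~\eqref{eq:likelihood_ratio_test} with cutoff $t\geq 0$ (nonnegative because a ratio of densities is nonnegative).

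The crux of (ii) is the pointwise inequality
\[
\bigl(\phi^*(x)-\phi(x)\bigr)\bigl(f_1(x)-t\,f_0(x)\bigr)\;\geq\;0\qquad\text{for all }x\in\cZ .
\]
I would verify it by splitting on the sign of $f_1(x)-t f_0(x)$, equivalently on whether $\Lambda(x)=f_1(x)/f_0(x)$ is $>t$, $=t$, or $<t$ (with $f_0(x)=0$ read as $\Lambda(x)=+\infty>t$): if $f_1(x)>t f_0(x)$ then $\phi^*(x)=1\geq\phi(x)$, so both factors are $\geq 0$; if $f_1(x)<t f_0(x)$ then $\phi^*(x)=0\leq\phi(x)$, so both factors are $\leq 0$; and if $f_1(x)=t f_0(x)$ the second factor is $0$. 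Integrating this inequality against $\mu$ yields
\[
\bigl(1-\beta(\phi^*)\bigr)-\bigl(1-\beta(\phi)\bigr)\;=\;\int(\phi^*-\phi)f_1\,d\mu\;\geq\;t\int(\phi^*-\phi)f_0\,d\mu\;=\;t\bigl(\alpha(\phi^*)-\alpha(\phi)\bigr).
\]
Under the hypothesis $\alpha(\phi)\leq\alpha(\phi^*)$ of (ii) and using $t\geq 0$, the right-hand side is $\geq 0$, hence $1-\beta(\phi^*)\geq 1-\beta(\phi)$, i.e.\ $\beta(\phi)\geq\beta(\phi^*)$.

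For part (i), I would set $\psi:=1-\phi$, which is again a deterministic $\cZ\to[0,1]$ map, so (ii) applies to it. Its type-I error is $\alpha(\psi)=1-\alpha(\phi)$, and the assumption $\alpha(\phi)\geq 1-\alpha(\phi^*)$ is precisely $\alpha(\psi)\leq\alpha(\phi^*)$; therefore (ii) gives $\beta(\psi)\geq\beta(\phi^*)$. Unwinding $\beta(\psi)=\bE_1(1-\psi)=\bE_1(\phi)=1-\beta(\phi)$ turns this into $1-\beta(\phi)\geq\beta(\phi^*)$, which is exactly (i).

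I do not anticipate a genuine obstacle; the argument is short and the only places needing care are bookkeeping: arguing the pointwise inequality on all of $\cZ$, including the null set $\{f_0=0\}$ (where $\phi^*=1$ and $f_1-t f_0=f_1\geq 0$), noting $t\geq 0$, and checking that $1-\phi$ is a legitimate test for the same problem so that (ii) may be invoked on it. In particular no facts about $\phi^*$ beyond its threshold form are used, so nothing about the existence or calibration of $\phi^*$ at a prescribed level enters this lemma.
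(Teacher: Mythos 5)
Your proof is correct and rests on the same core Neyman--Pearson comparison as the paper's: integrating $(\phi^*-\phi)(f_1-t f_0)\geq 0$ is exactly the paper's region-by-region bound over $\{\Lambda>t\}$, $\{\Lambda\leq t\}$, $\{\Lambda=t\}$. The only organizational difference is that you prove (ii) first and obtain (i) by applying it to the complementary test $1-\phi$, whereas the paper proves (i) directly and declares (ii) analogous; both are sound.
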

\begin{proof}
    We first show $(i)$. Let $\phi^*$ be a likelihood ratio test as defined in~(\ref{eq:likelihood_ratio_test}).
    Then, for any other test $\phi$ we have
    \begin{equation}
    \small
    \begin{aligned}
        \small
        &1-\beta(\phi^*) - \beta(\phi)=\\
        &=\int_{\Lambda > t} \phi f_1 d\mu + \int_{\Lambda \leq t} \left(\phi - 1\right)f_1 d\mu + q \int_{\Lambda = t} f_1 d\mu\\
        &=\int\displaylimits_{\Lambda > t} \phi \Lambda f_0 d\mu + \int\displaylimits_{\Lambda \leq t} \underbrace{\left(\phi - 1\right)}_{\leq 0}\Lambda f_0 d\mu + q \int\displaylimits_{\Lambda = t} \Lambda f_0 d\mu\\
        &\geq t\cdot \left[\int\displaylimits_{\Lambda > t} \phi f_0 d\mu + \int\displaylimits_{\Lambda \leq t} \left(\phi - 1\right) f_0 d\mu + q \int\displaylimits_{\Lambda = t} f_0 d\mu\right]\\
        &= t\cdot\left[\alpha(\phi) - (1 - \alpha(\phi^*))\right]\geq 0
    \end{aligned}
    \end{equation}
    with the last inequality following from the assumption and $t\geq 0$. Thus, $(i)$ follows; $(ii)$ can be proved analogously.
\end{proof}

\begin{proof}[Proof of Theorem~\ref{thm:main}]
    We first show the existence of a likelihood ratio test $\phi_A$ with significance level $1 - p_A$. Let $Z':= (\Omega_x,\Delta) + Z$ and recall that the likelihood ratio $\Lambda$ between the densities of $Z$ and $Z'$ is given by $\Lambda(z) = \frac{f_{Z'}(z)}{f_Z(z)}$ and let $X' := \Omega_x + X$ and $D' = \Delta + D$.
    Furthermore, for any $p\in[0,\,1]$, let $t_p := \inf\{t\geq0\colon \Prob(\Lambda(Z) \leq t) \geq p \}$ and
    \begin{equation}
        \small
        q_p = \begin{cases}
            0 &\text{ if } \Prob\left(\Lambda(Z) = t_p\right) = 0,\\
            \frac{\Prob(\Lambda(Z) \leq t_p) - p}{\Prob(\Lambda(Z) = t_p)} & \text{ otherwise}.
        \end{cases}
    \end{equation}
    Note that by Lemma~\ref{lem:sandwich} we have that $\Prob(\Lambda(Z) \leq t_p) \geq p$ and
    \begin{equation}
        \small
        \begin{aligned}
            \Prob(\Lambda(Z) \leq t_p) &= \Prob(\Lambda(Z) < t_p) + \Prob(\Lambda(Z) = t_p)\\
            &\leq p + \Prob(\Lambda(Z) = t_p)
        \end{aligned}
    \end{equation}
    and hence $q_p \in [0,\,1]$. For $p\in[0,\,1]$, let $\phi_p$ be the likelihood ratio test defined in~\eqref{eq:likelihood_ratio_test} with $q\equiv q_p$ and $t\equiv t_p$.
    Note that $\phi_p$ has type-I error probability $\alpha(\phi_p) = 1-p$.
    Thus, the test $\phi_A \equiv \phi_{p_A}$ satisfies $\alpha(\phi_A) = 1-p_A$. It follows from assumption~(\ref{eq:confidence}) that $\Prob_{X,\,D}(h(x+X,\,\cD + D) = y_A) = q(y_A\lvert\,x,\,\cD) \geq 1 - \alpha(\phi_A)$ and thus, by applying the first part of Lemma~\ref{lem:np_aux} to the functions $\phi(z) \equiv \Id_{\{h((x,\,\cD) + z)=y_A\}}(z)$ and $\phi^*\equiv\phi_A$, it follows that
    \begin{align}
        \label{eq:main_proof_lower_bound}
        \begin{split}
            q(y_A\lvert\,x + \Omega_x,\,\cD + \Delta)
            &=1-\beta(\phi) \geq \beta(\phi_A).
        \end{split}
    \end{align}
    Similarly, the likelihood ratio test $\phi_B \equiv \phi_{1-p_B}$ satisfies $\alpha(\phi_B) = p_B$ and, for $y\neq y_A$, it follows from the assumption~(\ref{eq:confidence}) that $\Prob_{X,\,D}(h(x+X,\,\cD + D) = y) = q(y\lvert\,x,\,\cD) \leq p_B = \alpha(\phi_B).$
    Thus, applying the second part of Lemma~\ref{lem:np_aux} to the functions $\phi(z) = \Id_{\{h((x,\,\cD) + z)=y\}}(z)$ and $\phi^*\equiv\phi_B$ yields
    \begin{align}
        \label{eq:main_proof_upper_bound}
        \begin{split}
            q(y\lvert\,x + \Omega_x,\,\cD + \Delta)
            & = 1 - \beta(\phi) \leq 1 - \beta(\phi_B).
        \end{split}
    \end{align}
    Combining~(\ref{eq:main_proof_lower_bound}) and~(\ref{eq:main_proof_upper_bound}) we see that, if $\beta(\phi_A) + \beta(\phi_B) > 1$,
    then it is guaranteed that $q(y_A\lvert\,x + \Omega_x,\,\cD + \Delta) > \max_{y\neq y_A}q(y\lvert\, x + \Omega_x,\,\cD + \Delta)$
    what completes the proof.
\end{proof}

\subsection{Proof of Theorem~\ref{thm:tightness}}
\begin{proof}
    We show tightness by constructing a base classifier $h^*$, such that the smoothed classifier is consistent with the class probabilities~(\ref{eq:confidence}) for a given (fixed) input $(x_0,\,\cD_0)$ but whose smoothed version is not robust for adversarial perturbations $(\Omega_x,\,\Delta)$ that violate~(\ref{eq:robustness_condition}).
    Let $\phi_A$ and $\phi_B$ be two likelihood ratio tests for testing the null $Z\sim\Prob_0$ against the alternative $Z + (\Omega_x,\,\Delta) \sim \Prob_1$ and let $\phi_A$ be such that $\alpha(\phi_A) = 1-p_A$ and $\phi_B$ such that $\alpha(\phi_B) = p_B$. Since $(\Omega_x,\,\Delta)$ violates~\eqref{eq:robustness_condition}, we have that $\beta(\phi_A) + \beta(\phi_B) \leq 1$.
    Let $p^*$ be given by
    \begin{align}
        p^*(y\lvert\,x,\,\cD) = \begin{cases}
            1 - \phi_A(x - x_0,\,\cD-\cD_0)& \,y = y_A\\
            \phi_B(x - x_0,\,\cD-\cD_0)& \,y = y_B\\
            \frac{1 - p^*(y_A\lvert\,x,\,\cD) - p(y_B\lvert\,x,\,\cD)}{C-2} &\,\mathrm{o.w.}
        \end{cases}
    \end{align}
    where the notation $\cD - \cD_0$ denotes subtraction on the features but not on the labels.
    Note that for binary classification, $C=2$ we have that $\phi_A = \phi_B$ and hence $p^*$ is well defined since in this case, by assumption $p_A + p_B = 1$.
    If $C > 2$, note that it follows immediately from the definition of $p^*$ that $\sum_k p^*(y\lvert\,x,\,\cD) = 1$. Note that, from the construction of $\phi_A$ and $\phi_B$ in the proof of Theorem~\ref{thm:main} (Appendix~\ref{appendix:main_theorem_proof}) that (pointwise) $\phi_A \geq \phi_B$ provided  $p_A + p_B \leq 1$.
    It follows that for $y\neq y_A,\,y_B$ we have $p^*(y\lvert\,x,\,\cD) \propto \phi_A - \phi_B \geq 0$. Thus, $p^*$ is a well defined (conditional) probability distribution over labels and $h^*(x,\,\cD):=\arg\max_y p^*(y\lvert\,x,\,\cD)$ is a base classifier. Furthermore, to see that the corresponding smoothed classifier $q^*$ is consistent with the class probabilities~(\ref{eq:confidence}), consider
    \begin{align}
        q^*(y_A\lvert\,x_0,\,\cD_0) &= \bE(1 - \phi_A(X,\,D)) =  p_A
    \end{align}
    and
    \begin{align}
        q^*(y_B\lvert\,x_0,\,\cD_0) &= \bE(\phi_B(X,\,D)) = \alpha(\phi_B) = p_B.
    \end{align}
    In addition, for any $y \neq y_A,\,y_B$, we have $q^*(y\lvert\,x_0,\,\cD_0) = (1-p_A-p_B)/(C-2) \leq p_B$ since by assumption $p_A + p_B \geq 1 -(C-2)\cdot p_B$. Thus, $q^*$ is consistent with the class probabilities~\eqref{eq:confidence}
    In addition, note that
    $q^*(y_A\lvert\,x_0 + \Omega_x,\,\cD_0 + \Delta) = 1 -\beta(\phi_A)$ and $\beta(\phi_B)=q^*(y_B\lvert\,x_0+\Omega_x,\,\cD_0+\Delta)$. Since by assumption $1 -\beta(\phi_A) < \beta(\phi_B)$ we see that indeed $y_A \neq g^*(x_0+\Omega_x,\,\cD_0 + \Delta)$.
\end{proof}

\subsection{Proof of Corollary~\ref{cor:gaussian}}
\begin{proof}
    We prove this statement by direct application of Theorem~\ref{thm:main}. Let $Z=(X,\,D)$ be the smoothing distribution for $q$ and let $\Tilde{Z}:= (\Omega_x,\,\Delta) + Z$ and $\Tilde{Z}':= (0,\,-\Delta) + \Tilde{Z}$. Correspondingly, let $\tilde{q}(y\lvert\,x,\,\cD) = q(y\lvert\,x + \Omega_x,\,\cD + \Delta)$. By assumption, we have that $\tilde{q}(y_A\lvert\,x,\,\cD) \geq p_A$ and $\max_{y\neq y_A}\Tilde{q}(y\lvert\,x,\,\cD) \leq p_B$.
    We will now apply Theorem~\ref{thm:main} to the smoothed classifier $\tilde{q}$. By Theorem~\ref{thm:main}, there exist likelihood ratio tests $\phi_A$ and $\phi_B$ for testing $\Tilde{Z}$ against $\Tilde{Z}'$ such that, if
    \begin{equation}
        \label{eq:robustness_condition_gaussian_apx}
        \beta(\phi_A) + \beta(\phi_B) > 1
    \end{equation}
    then it follows that $y_A = \arg\max_y\tilde{q}(y\lvert\,x,\,\cD - \Delta)$ The statement then follows, since $\tilde{q}(y\lvert\,x,\,\cD - \Delta)= \arg\max_y\tilde{q}(y\lvert\,x+\Omega_x,\,\cD+\Delta)$.
    We will now construct the corresponding likelihood ratio tests and show that~(\ref{eq:robustness_condition_gaussian_apx}) has the form~(\ref{eq:gaussian_bound}). Note that the likelihood ratio between $\Tilde{Z}$ and $\Tilde{Z}'$ at $z=(x,\,d)$ is given by $\Lambda(z) = \exp\left(\sum_{i=1}^n \langle d_i,\,-\delta_i \rangle_\Sigma + \frac12\langle \delta_i,\,\delta_i\rangle_\Sigma\right)$
    where $\Sigma = \sigma^2\Id_d$ and $\langle a,\,b\rangle_\Sigma := \sum_{i=1}^n a_i b_i / \sigma^2$.
    Thus, since singletons have probability $0$ under the Gaussian distribution, any likelihood ratio test for testing $\Tilde{Z}$ against $\Tilde{Z}'$ has the form
    \begin{align}
        \phi_t(z) =
            \begin{cases}
                1, &\,\Lambda(z) \geq t.\\
                0, &\,\Lambda(z) <  t.
            \end{cases}
    \end{align}
    For $p\in[0,\,1]$, let
    $t_p := \exp(\Phi^{-1}(p) \sqrt{\sum_{i=1}^n\langle\delta_i,\,\delta_i\rangle_\Sigma} - \frac{1}{2}\sum_{i=1}^n\langle\delta_i,\,\delta_i\rangle_\Sigma)$
    and note that $\alpha(\phi_{t_p}) = 1-p$ since
    $\alpha(\phi_{t_p}) = 1 - \Phi(\frac{\log(t_p) + \frac{1}{2}\sum_{i=1}^n\langle\delta_i,\,\delta_i\rangle_\Sigma}{\sqrt{\sum_{i=1}^n\langle\delta_i,\,\delta_i\rangle_\Sigma}})$
    where $\Phi$ is the CDF of the standard normal distribution.
    Thus, the test $\phi_A\equiv\phi_{t_A}$ with $t_A\equiv t_{p_A}$ satisfies $\alpha(\phi_A) = 1-p_A$ and the test $\phi_B\equiv\phi_{t_B}$ with $t_B\equiv t_{1-p_B}$ satisfies $\alpha(\phi_B) = p_B$.
    Computing the type II error probability of $\phi_A$ yields
    $\beta(\phi_A)=\Phi(\Phi^{-1}(p_A) - \sqrt{\sum_{i=1}^n\langle\delta_i,\,\delta_i\rangle_\Sigma})$.
    and, similarly, the type II error probability of $\phi_B$ is given by
    $\beta(\phi_B) = \Phi(\Phi^{-1}(1 - p_B) - \sqrt{\sum_{i=1}^n\langle\delta_i,\,\delta_i\rangle_\Sigma})$.
    Finally, we see that $\beta(\phi_A) + \beta(\phi_B) > 1$ is satisfied if and only if $\sqrt{\sum_{i=1}^n \|\delta_i\|_2^2} < \frac{\sigma}{2}(\Phi^{-1}(p_A) - \Phi^{-1}(p_B))$.
\end{proof}

\subsection{Proof of Corollary~\ref{cor:uniform}}
\begin{proof}
    We proceed analogously to the proof of Corollary~\ref{cor:gaussian} but with a uniform distribution on the feature vectors $D\sim\cU([a,\,b])$ and construct the likelihood ratio tests in the uniform case.
    Denote by $S:=\prod_{i=1}^n S_i,\,S_i:=\prod_{j=1}^d[a + \delta_{ij},\,b + \delta_{ij}]$ the support of $\Tilde{D} := \Delta + D$ and by $S':=\prod_{i=1}^n[a,\,b]^d$ the support of $\Tilde{D}' := D$.
    Note that the likelihood ratio between $\Tilde{Z}$ against $\Tilde{Z}'$ at $z=(x,\,w,\,v)$ for any $w \in S \cup S'$ is given by
    \begin{align}
        \Lambda(z) = \frac{f_{\Tilde{Z}'}(z)}{f_{\Tilde{Z}}(z)} = \frac{f_{\Tilde{W}'}(w)}{f_{\Tilde{W}}(w)} =
            \begin{cases}
                0 &\,w \in S\setminus S',\\
                1 &\,w \in S \cap S',\\
                \infty &\, w \in S'\setminus S.
            \end{cases}
    \end{align}
    and that any likelihood ratio test for testing $\Tilde{Z}$ against $\Tilde{Z}'$ has the form~\eqref{eq:likelihood_ratio_test}.
    We now construct such likelihood ratio tests $\phi_A,\,\phi_B$ with $\alpha(\phi_A) = 1 - p_A$ and $\alpha(\phi_B) = p_B$ by following the construction in the proof of Theorem~\ref{thm:main}. Specifically, we compute $q_A,\,t_A$ such that these type I error probabilities are satisfied. Notice that $p_0 := \Prob(\Tilde{W}\in S\setminus S')
        = 1 - \prod_{i=1}^n(\prod_{j=1}^d(1 - \frac{\abs{\delta_{ij}}}{b-a})_+)$
    where $(x)_+ = \max\{x,\,0\}$.
    For $t\geq 0$ we have $\Prob(\Lambda(\Tilde{Z}) \leq t) = p_0$ if $t < 1$ and $1$ otherwise.
    Thus $t_p:=\inf\{t\geq0\colon \Prob(\Lambda(Z) \leq t) \geq p \}$ is given by $t_p = 0$ if $p \leq p_0$ and $t_p = 1$ if $p>p_0$.
    We notice that, if $p_A \leq p_0$, then $t_A \equiv t_{p_A} = 0$. This implies that the type II error probability of the corresponding test $\phi_A$ is $0$ since in this case
    \begin{align}
        \beta(\phi_A) &= 1 - \Prob\left(\Lambda(\Tilde{Z}') > 0\right) - q_A\cdot \Prob\left(\Lambda(\Tilde{Z}') = 0\right)\\
        &\hspace{-2em}= 1 - \Prob\left(\Tilde{D}' \in S'\right) - q_A\cdot \Prob\left(\Tilde{D}' \in S\setminus S'\right) = 0.
    \end{align}
    Similarly, if $1- p_B \leq p_0$ then $t_B \equiv t_{p_B} = 0$ and we obtain that the corresponding test $\phi_B$ satisfies $\beta(\phi_B) = 0$.
    In both cases
    $\beta(\phi_A) + \beta(\phi_B) > 1$ can never be satisfied and we find that $p_A > p_0$ and $1-p_B > p_0$ is a necessary condition.
    In this case, we have that $t_A = t_B = 1$. Let $q_A$ and $q_B$ be defined as in the proof of Theorem~\ref{thm:main}
    \begin{align}
        q_A &:= \frac{\Prob(\Lambda(\Tilde{Z}) \leq 1) - p_A}{\Prob(\Lambda(\Tilde{Z}) = 1)} = \frac{1 - p_A}{1 - p_0},\\
        q_B &:= \frac{\Prob(\Lambda(\Tilde{Z}) \leq 1) - (1- p_B)}{\Prob(\Lambda(\Tilde{Z}) = 1)} = \frac{1 - (1 - p_B)}{1 - p_0}.
    \end{align}
    Clearly, the corresponding likelihood ratio tests $\phi_A$ and $\phi_B$ have significance $1-p_A$ and $p_B$ respectively. Furthermore, notice that
    \begin{align}
      \Prob\left(\Tilde{D}' \in S'\setminus S\right) &= \Prob\left(\Tilde{D} \in S\setminus S'\right) = p_0\\
      \Prob\left(\Tilde{D}' \in S'\cap S\right) &= \Prob\left(\Tilde{D} \in S'\cap S\right) = 1-p_0
    \end{align}
    and hence $\beta(\phi_A)$ is given by
    \begin{align}
        \beta(\phi_A) &= 1 - \Prob\left(\Lambda(\Tilde{Z}') > 1\right) - q_A\cdot \Prob\left(\Lambda(\Tilde{Z}') = 1\right)\\
        &\hspace{-2em}= 1 - p_0 - q_A \cdot (1 - p_0) = p_A - p_0.
    \end{align}
    and similarly
    \begin{align}
        \beta(\phi_B) &= 1- \Prob\left(\Lambda(\Tilde{Z}') > 1\right) - q_B\cdot \Prob\left(\Lambda(\Tilde{Z}') = 1\right)\\
        &\hspace{-2em}= 1 - p_0 - q_B \cdot (1 - p_0) = 1 - p_B - p_0.
    \end{align}
    Finally, the statement follows, since $\beta(\phi_A) + \beta(\phi_B) > 1$ if and only if $1 - \left(\frac{p_A - p_B}{2}\right) < \prod_{i=1}^n\left(\prod_{j=1}^d\left(1 - \frac{\abs{\delta_{ij}}}{b-a}\right)_+\right).$
\end{proof}

\section{Smoothed K-NN Classifiers}
\label{appendix:knn}
We first formalize $K$-NN classifiers which use quantizeed Euclidean distance as a notion of similarity. Specifically, let $B_1=[0,\,b_1),\,\ldots,B_L:=[b_{L-1},\infty)$ be similarity buckets with increasing $b_1 < b_2 < \ldots, b_{L-1}$ and associated similarity levels $\beta_1 > \beta_2 > \ldots > \beta_L$. Then for $x,\,x'\in\bR^d$ we define their similarity as $\kappa(x,x') := \sum_{l=1}^L\beta_l\Id_{B_l}(\left\|x-x'\right\|_2^2)$
where $\Id_{B_l}$ is the indicator function of $B_l$. Given a dataset $D=(x_i,\,y_i)_{i=1}^n$ and a test instance $x$, we define the relation
\begin{align}
    \label{eq:knn_binary_relation}
    x_i &\succeq x_j \iff
        \begin{cases}
            \kappa(x_i,\,x) > \kappa(x_j,\,x) &\text{ if } i > j\\
            \kappa(x_i,\,x) \geq \kappa(x_j,\,x) &\text{ if } i \leq j
        \end{cases}
\end{align}
which says that the instance $x_i$ is more similar to $x$, if either it has strictly larger similarity or, if it has the same similarity as $x_j$, then $x_i$ is more similar if $i < j$. With this binary relation, we define the set of $K$ nearest neighbours of $x$ as $I_K(x,\,\cD):= \{i\colon\, \left|\{j\neq i\colon\,x_j \succeq x_i\}\right| \leq K-1\} \subseteq [n]$
and summarize the per class votes in $I_K$ as a label tally vector $  \gamma_k(x,\,\cD) := \#\{i\in I_K(x,\,\cD)\colon\,y_i = k\}.$
The $K$-NN prediction is given by $\text{$K$-NN}(x,\,\cD) = \arg\max_k\gamma_k(x,\,\cD).$

\subsection{Proof of Theorem~\ref{thm:knn_complexity}}
\begin{proof}
Our goal is to show that we can compute the smoothed classifier $q$ with $Z=(0,\,D)$, $D\sim\prod_{i=1}^n\cN(0,\,\sigma^2\Id_d)$ and defined by the probability
\begin{equation}
    \label{eq:knn_proof_prob1}
    q(y\lvert\,x,\,\cD) = \Prob_D\left(\text{$K$-NN}(x,\,\cD + D) = y\right)
\end{equation}
in time $\cO(K^{2+C}\cdot n^2 \cdot L \cdot C)$. For ease of notation, let $X_i:=x_i + D^{(i)}$ and $S_i:=\kappa(X_i,\,x)$ and note that  $p_i^l:=\Prob\left(S_i=\beta_l\right) = F_{d,\lambda_i}\left(\frac{b_l}{\sigma^2}\right) - F_{d,\lambda_i}\left(\frac{b_{l-1}}{\sigma^2}\right)$
where $F_{d,\lambda_i}$ is the CDF of the non-central $\chi^2$-distribution with $d$ degrees of freedom and non-locality parameter $\lambda_i=\left\|x_i + \delta_i - x\right\|_2^2/\sigma^2.$
Note that we can write~(\ref{eq:knn_proof_prob1}) equivalently as $\Prob_D\left(\arg\max_{k'}\gamma_{k'}(x,\,\cD + D)) = y\right)    $
and thus $q(y\lvert\,x,\,\cD) = \sum_{\gamma\in\Gamma_{k}}\Prob_D\left(\gamma(x,\,\cD+D) = \gamma\right)$
with $\Gamma_k := \{\gamma\in[K]^C\colon\,\arg\max_{k'}\gamma_{k'}=k\}$. Next, we notice that the event that a tally vector $\gamma$ occurs, can be partitioned into the events that lead to the given $\gamma$ and for which instance $i$ has similarity $\beta_l$ and is in the top-$K$ but not in the top-$(K-1)$. We define these to be the boundary events $\cB_i^l(\gamma) := \{\forall c\colon\,\#\{j\in I_c\colon X_j \succeq X_i\} = \gamma_c,\,S_i = \beta_l\}$
where $I_c = \{i\colon y_i = c\}$. The probability that a given tally vector $\gamma$ occurs is thus given by $\Prob_D\left(\gamma(x,\,\cD+D) = \gamma\right) = \sum_{i=1}^n\sum_{l=1}^L\Prob\left(\cB_i^l(\gamma)\right).$

For fixed $i$ we notice that the for different classes, the events $\{\#\{j\in I_c\colon X_j \succeq X_i\} = \gamma_c\}$ are pairwise independent, conditioned on $\{S_i=\beta_l\}$. Writing $P_c^l(i,\,\gamma)$ for the conditional probability $\Prob\left(\#\{i\in I_c\colon y_i=c\}=\gamma_c\lvert \,S_i=\beta_l\right)$ yields $\Prob\left(\cB_i^l(\gamma)\right) = p_i^l\cdot\prod_{c=1}^C\,P_c^l(i,\,\gamma)$
and hence $q(y\lvert\,x,\,\cD) = \sum_{\gamma\in\Gamma_{k}}\sum_{i=1}^n\sum_{l=1}^L p_i^l\cdot\prod_{c=1}^C\,P_c^l(i,\,\gamma)$
which requires $\cO(K^C\cdot n\cdot L\cdot C)$ evaluations of $P_c^l$.
The next step is to compute the probabilities $P_c^l$. For that purpose, we need to open up the binary relation $\succeq$.
Suppose first that $y_i \neq c$.
Then the event that exactly $\gamma_c$ instances of class $c$ satisfy $X_j\succeq X_i$ is the same as the event that for some $r\leq \gamma_c$ exactly $r$ instances with index larger than $i$ have similarity strictly larger than $X_i$ and exactly $\gamma_c-r$ instances with an index smaller than $i$ have similarity larger or equal than $X_i$.
Now suppose that $y_i = c$.
Then, the event that exactly $\gamma_c$ instances of the same class $c$ satisfy $X_j\succeq X_i$ is the same as the event that for some $r\leq \gamma_c$ exactly $r$ instances with an index larger than $i$ have similarity strictly larger than $X_i$ and exactly $\gamma_c-r-1$ instances with an index smaller than $i$ have similarity larger or equal than $X_i$. This reasoning allows us to write $P_c^l$ in terms of functions
\begin{align}
    R_c^l(i,\,r)&:=\Prob\left(|\{j\in I_c\colon\,S_j > \beta_l,\,j>i\}|=r\right)\\
    Q_c^l(i,\,r)&:=\Prob\left(|\{j\in I_c\colon\,S_j \geq \beta_l,\,j<i\}|=r\right)
\end{align}
as
\begin{align*}
    P_c^l(i,\,\gamma)=\begin{cases}
        \sum_{r=0}^{\gamma_c} R_c^l(i, r)\cdot Q_c^l(i,\gamma_c-r) & y_i \neq c\\
        \sum_{r=0}^{\gamma_c-1} R_c^l(i, r)\cdot Q_c^l(i,\gamma_c-r-1) &y_i = c.
    \end{cases}
\end{align*}
The functions $R_l^c$ and $Q_l^c$ exhibit a recursive structure that we wish to exploit to get an efficient algorithm. Specifically, we write $\alpha_i^l:=\Prob(S_i \leq \beta_l) = \sum_{s=l}^L p_i^l$, and $\Bar{\alpha}_i^l:=1-\alpha_i^l$ and use the following recursion
\begin{align*}
    R_l^c(i-1,\,r) =
        \begin{cases}
            R_c^l(i,r) &y_{i}\neq c\\
            \Bar{\alpha}_{i}^l\cdot R_c^l(i,r-1) + \alpha_{i}^l\cdot R_c^l(i,r) &y_{i} = c
        \end{cases}
\end{align*}
starting at $i=n$ and $r=0$ and with initial values $R_c^l(i,\,-1) = 0$, $R_l^c(n,\,0) = 1$ and $R_l^c(n,\,r)= 0$ for $r\geq 1$. Similarly,
\begin{align}
    \label{eq:knn_q_func}
    Q_c^l(i+1,\,r) =
        \begin{cases}
            Q_c^l(i,\,r) & y_{i} \neq c\\
            \!\begin{aligned}
                &\Bar{\alpha}_{i}^{l+1}\cdot Q_c^l(i,\,r-1)\\
                &\hspace{2em} + \alpha_i^{l+1}\cdot Q_c^l(i,\,r)
            \end{aligned}
            & y_{i} = c
        \end{cases}
\end{align}
starting the recursion at $i=1$ and $r=0$ and with initial values $Q_c^l(i,\,-1) = 0$, $Q_c^l(1,\,0) = 1$ and $Q_c^l(1,\,r) = 0$ for $r \geq 1$. Evaluating $P_c^l$ requires $\cO(K)$ calls to $R_c^l$ and $Q_c^l$ each. The computation of $R_c^l$ and $Q_c^l$ can be achieved in $\cO(n\cdot K)$ if the values $\alpha_i^l$ are computed beforehand and stored separately (requiring $\cO(n\cdot L)$ computations). The entire computation has complexity $\cO(K^{C+2}\cdot n
^2 \cdot L \cdot C)$.
\end{proof}


\section{Additional Experimental Results}
\subsection{All-to-all Attacks}

\label{sec:exp-a2a}
\begin{table*}
	\caption{Evaluation on \textbf{DNNs} with different datasets with an all-to-all attack goal. We use $\sigma=0.5$ for MNIST and $\sigma=0.2$ for CIFAR-10 and ImageNette. ``Vanilla" denotes DNNs without RAB training and ``RAB-cert” presents certified accuracy of RAB. The highest empirical robust accuracies are {\bf bolded}.}
	\label{tab:result-dnn-all2all}
	\centering
	\resizebox{\linewidth}{!}{
	\begin{tabular}{l c c c c c c c c c c c c c}
		\toprule
        & \multirow{2}{*}[-2pt]{\makecell{Backdoor \\ Pattern}} & \multicolumn{2}{c}{Acc. on Benign Instances} & \multicolumn{8}{c}{Empirical Robust Acc.} & Certified Robust Acc.\\
        \cmidrule(lr){3-4}\cmidrule(lr){5-12}\cmidrule(lr){13-13}
		& & Vanilla & RAB & Vanilla & RAB & AC~\cite{chen2018detecting} & Spectral~\cite{tran2018spectral} & Sphere~\cite{steinhardt2017certified} & NC~\cite{wang2019neural} & SCAn~\cite{tang2021demon} & Mixup~\cite{borgnia2021strong} & RAB-cert \\
		\midrule
		\multirow{3}{*}{MNIST} & One-pixel & 91.5\% & 90.2\% & 0\% & \textbf{51.2\%} & 17.3\% & 3.0\% & 2.8\% & 28.4\% & 4.9\% & 37.1\% & 24.4\% \\
		& Four-pixel & 91.6\% & 91.3\% & 0\% & \textbf{60.3\%} & 16.1\% & 2.7\% & 1.8\% & 30.0\% & 1.8\% & 38.7\% & 39.9\% \\
		& Blending & 91.5\% & 91.2\% & 0\% & \textbf{59.7\%} & 15.4\% & 3.0\% & 1.8\% & 30.1\% & 4.7\% & 34.6\% & 39.1\% \\
		\midrule
		\multirow{3}{*}{CIFAR-10} & One-pixel & 58.4\% & 52.2\% & 0\% & 24.9\% & \textbf{26.7\%} & 5.7\% & 18.2\% & 13.2\% & 10.1\% & 19.7\% & 10.5\%\\
		& Four-pixel & 57.5\% & 52.1\% & 0\% & \textbf{25.1\%} & 11.2\% & 17.8\% & 18.3\% & 17.0\% & 13.3\% & 18.7\% & 11.6\% \\
		& Blending & 58.3\% & 52.1\% & 0\% & \textbf{24.8\%} & 10.0\% & 17.7\% & 15.9\% & 12.5\% & 10.7\% & 17.0\% & 10.9\% \\
		\midrule
		\multirow{3}{*}{ImageNette} & One-pixel & 92.5\% & 93.0\% & 0\% & 43.1\% & 32.8\% & 19.6\% & 41.2\% & 23.5\% & 23.5\% & \textbf{49.2\%} & 7.8\% \\
		& Four-pixel & 93.6\% & 93.0\% & 0\% & 37.5\% & 18.8\% & 18.8\% & 43.8\% & 26.3\% & 21.7\% & \textbf{58.3\%} & 18.7\%  \\
		& Blending & 95.0\% & 92.9\% & 0\% & 44.9\% & 46.9\% & 22.9\% & 34.7\% & 21.0\% & 14.3\% & \textbf{49.0\%} & 16.3\% \\
		\bottomrule
	\end{tabular}
	}
\end{table*}
\begin{table*}
	\caption{Evaluation on \textbf{DNNs} with different datasets with a large attack perturbation. We use $\sigma=0.5$ for MNIST and $\sigma=0.2$ for CIFAR-10 and ImageNette. ``Vanilla" denotes DNNs without RAB training and ``RAB-cert” presents certified accuracy of RAB. The highest empirical robust accuracies are {\bf bolded}.}
	\label{tab:result-dnn-largepert}
	\centering
	\resizebox{\linewidth}{!}{
	\begin{tabular}{l c c c c c c c c c c c c c}
		\toprule
        & \multirow{2}{*}[-2pt]{\makecell{Backdoor \\ Pattern}} & \multicolumn{2}{c}{Acc. on Benign Instances} & \multicolumn{8}{c}{Empirical Robust Acc.} & Certified Robust Acc.\\
        \cmidrule(lr){3-4}\cmidrule(lr){5-12}\cmidrule(lr){13-13}
		& & Vanilla & RAB & Vanilla & RAB & AC~\cite{chen2018detecting} & Spectral~\cite{tran2018spectral} & Sphere~\cite{steinhardt2017certified} & NC~\cite{wang2019neural} & SCAn~\cite{tang2021demon} & Mixup~\cite{borgnia2021strong} & RAB-cert \\
		\midrule
		\multirow{1}{*}{MNIST} & Large & 86.8\% & 86.5\% & 0\% & 42.3\% & 65.5\% & 8.1\% & 0.6\% & \textbf{70.9\%} & 11.9\% & 20.4\% & 0\% \\
		\midrule
		\multirow{1}{*}{CIFAR-10} & Large & 52.1\% & 44.8\% & 0\% & \textbf{27.2\%} & 20.88\% & 16.34\% & 11.96\% & 25.5\% & 8.6\% & 2.4\% & 0\% \\
		\midrule
		\multirow{1}{*}{ImageNette} & Large & 84.7\% & 81.6\% & 0\% & 46.4\% & 62.6\% & 36.3\% & 1.5\% & \textbf{74.9\%} & 55.5\% & 59.5\% & 0\%\\
		\bottomrule
	\end{tabular}
	}
\end{table*}
\begin{table*}
	\caption{Evaluation on \textbf{Kernel KNN} with different datasets. We use $\sigma=0.5$ for MNIST and $\sigma=0.2$ for CIFAR-10 and ImageNette. ``Vanilla" denotes DNNs without RAB training and ``RAB-cert” presents certified accuracy of RAB. The highest empirical robust accuracies are {\bf bolded}.}
	\label{tab:result-kernelknn}
	\centering
	\resizebox{\linewidth}{!}{
	\begin{tabular}{l c c c c c c c c c c c c c}
		\toprule
        & \multirow{2}{*}[-2pt]{\makecell{Backdoor \\ Pattern}} & \multicolumn{2}{c}{Acc. on Benign Instances} & \multicolumn{8}{c}{Empirical Robust Acc.} & Certified Robust Acc.\\
        \cmidrule(lr){3-4}\cmidrule(lr){5-12}\cmidrule(lr){13-13}
		& & Vanilla & RAB & Vanilla & RAB & AC~\cite{chen2018detecting} & Spectral~\cite{tran2018spectral} & Sphere~\cite{steinhardt2017certified} & NC~\cite{wang2019neural} & SCAn~\cite{tang2021demon} & Mixup~\cite{borgnia2021strong} & RAB-cert \\
		\midrule
		\multirow{3}{*}{MNIST} & One-pixel & 88.5\% & 78.2\% & 0\% & 35.7\% & 45.4\% & 53.0\% & 48.2\% & 53.0\% & 55.8\% & \textbf{59.5\%} & 18.0\% \\
		& Four-pixel & 88.5\% & 78.1\% & 0\% & 36.6\% & 50.6\% & 53.6\% & 48.3\% & \textbf{69.9\%} & 55.6\% & 52.2\% & 18.8\% \\
		& Blending & 88.4\% & 78.4\% & 0\% & 36.6\% &44.8\% & 52.4\% & 47.4\% & 51.5\% & \textbf{55.8\%} & 52.9\% & 18.8\%  \\
		\midrule
		\multirow{3}{*}{CIFAR-10} & One-pixel & 49.7\% & 46.5\% & 0\% & 21.6\% & 9.0\% & 24.9\% & 15.6\% & 16.5\% & 12.9\% & \textbf{25.1\%} & 11.3\% \\
		& Four-pixel & 49.5\% & 46.6\% & 0\% & 21.9\% & 15.9\% & 21.7\% & \textbf{22.7\%} & 13.4\% & 15.0\% & 19.2\% & 11.7\% \\
		& Blending & 49.8\% & 46.6\% & 0\% & 20.6\% & 17.0\% & 19.6\% & 15.1\% & 14.7\% & 16.8\% & \textbf{21.8\%} & 10.5\% \\
		\midrule
		\multirow{3}{*}{ImageNette} & One-pixel & 90.1\% & 88.6\% & 0\% & 35.3\% & \textbf{56.8\%} & 22.2\% & 28.4\% & 40.9\% & 19.3\% & 31.3\% & 8.8\% \\
		& Four-pixel & 90.7\% & 88.5\% & 0\% & 32.0\% & \textbf{52.2\%} & 29.6\% & 41.5\% & 34.0\% & 30.8\% & 27.7\% & 7.6\%\\
		& Blending & 91.5\% & 88.5\% & 0\% & 32.1\% & \textbf{33.3\%} & 17.2\% & 2.5\% & 23.0\% & 13.8\% & 21.8\% & 7.6\% \\
		\bottomrule
	\end{tabular}
	}
\end{table*}
In previous evaluations, the attack goal is to fool the model into a specific class. Here, we consider another attack goal that, on seeing the trigger pattern, the model will change its prediction from the $i$-th class to the $((i+1)\% C)$-th class, where $C$ is the number of classes. Different with the previous goal, the model here will need to recognize both the image and the trigger to make the malicious prediction. Thus, the defenses which assume that the backdoored model makes behavior only based on the backdoor trigger (e.g. NC) will intuitively not have a good performance.

The result of the all-to-all attack is shown in  Table~\ref{tab:result-dnn-all2all}. We observe that our approach achieves a similar performance for empirical and certified robustness. The performance on MNIST and ImageNette is slightly better compared with the standard attack, while on CIFAR-10 it decreases a little. As for the baselines, we can observe that the performance of Mixup is also consistent with that on the standard attack. This is understandable as Mixup also performs defense by processing the input and does not rely on model analysis. By comparison, the other baseline approaches based on model analysis does not achieve a good performance here. We owe it to the reason that in all-to-all attacks, the trained model needs to focus on both original image and the trigger pattern, so it is more difficult to detect the backdoors by model analysis than in standard attack where the model only focuses on the trigger pattern.

\subsection{Larger Perturbation}
\label{sec:exp-large-pert}
We consider a larger perturbation consisting of a $4\times 4$ trigger pattern with poison rate at 20\% and perturbation scale $||\delta_i||=4.0$ on MNIST and $||\delta_i||=4\sqrt{3}$ on CIFAR-10 and ImageNette (the $\sqrt{3}$ here comes from the fact that we add perturbation on all 3 channels). The results are shown in Table~\ref{tab:result-dnn-largepert}.
We can see that such strong perturbation is too large to be within our certification radius, which is a limit of our work. Therefore, the certified robust accuracy is 0. Nevertheless, we can still achieve some non-trivial empirical robustness and is comparable with baselines. This shows that our approach can be applied empirically to defend against strong backdoors with larger perturbation.

\subsection{Kernel-KNN}
\label{sec:exp-kernel-knn}
We evaluate the defense on KNNs with a kernel function. The kernel function is learned with the convolution neural network trained on the supervised task and uses the hidden representation of the last layer before output as the kernel output. Note that in this case, our exact KNN certification algorithm cannot be applied since the output with Gaussian variable cannot be analyzed with the kernel function. Therefore, we use the evaluation algorithm as in DNN to evaluate the certification performance.
As shown in Table~\ref{tab:result-kernelknn}, our approach achieves worse performance than on DNNs, which is understandable since KNN models are known to usually underperform DNN models. On the other hand, we observe that many baselines actually have a better performance than DNN. We view the reason to be that the baselines are based on the detection-and-removal algorithm. We find that the detection will only remove a subset of backdoored instances, so a trained DNN model will still be backdoored; however, any removal of backdoored training data will help the performance of KNN since fewer backdoored instances will be viewed as neighborhood, so the performance may improve. By comparison, RAB will not detect and remove instances and thus will not have a better performance on KNN.

\subsection{SVM-based model on tabular data}
\label{sec:exp-svm}
\begin{table*}
	\caption{Evaluation on \textbf{SVM} with different tabular datasets. We use $\sigma=0.5$ for Spam and $\sigma=0.2$ for Adult and Mushroom. ``Vanilla" denotes DNNs without RAB training and ``RAB-cert” presents certified accuracy of RAB. The highest empirical robust accuracies are {\bf bolded}.}
	\label{tab:result-svm}
	\centering
	\resizebox{\linewidth}{!}{
	\begin{tabular}{l c c c c c c c c c c c}
		\toprule
        & \multirow{2}{*}[-2pt]{\makecell{Backdoor \\ Pattern}} & \multicolumn{2}{c}{Acc. on Benign Instances} & \multicolumn{6}{c}{Empirical Robust Acc.} & Certified Robust Acc.\\
        \cmidrule(lr){3-4}\cmidrule(lr){5-10}\cmidrule(lr){11-11}
		& & Vanilla & RAB & Vanilla & RAB & AC~\cite{chen2018detecting} & Spectral~\cite{tran2018spectral} & Sphere~\cite{steinhardt2017certified} & SCAn~\cite{tang2021demon} & RAB-cert\\
		\midrule
		\multirow{3}{*}{Spam} & One-pixel & 91.8\% & 88.4\% & 0\% & \textbf{49.1\%} & 0\% & 18.3\% & 4.8\% & 12.9\% & 33.3\% \\
		& Four-pixel & 91.2\% & 88.6\% & 0\% & \textbf{48.2\%} & 0\% & 6.6\% & 7.4\% & 11.5\% & 32.1\% \\
		& Blending & 92.0\% & 89.2\% & 0\% & \textbf{44.7\%} & 0\% & 5.8\% & 5.8\% & 11.5\% & 29.8\% \\
		\midrule
		\multirow{3}{*}{Adult} & One-pixel & 79.0\% & 77.2\% & 0\% & \textbf{50.7\%} & 6.3\% & 15.3\% & 32.2\% & 8.4\% & 17.1\% \\
		& Four-pixel & 77.4\% & 73.1\% & 0\% & \textbf{53.0\%} & 5.4\% & 12.8\% & 14.4\% & 7.1\% & 21.5\% \\
		& Blending & 78.8\% & 76.4\% & 0\% & \textbf{55.9\%} & 8.0\% & 5.0\% & 11.6\% & 4.7\% & 26.1\% \\
		\midrule
		\multirow{3}{*}{Mushroom} & One-pixel & 87.5\% & 82.0\% & 0\% & \textbf{42.5\%} & 16.9\% & 0\% & 6.4\% & 17.3\% & 23.5\% \\
		& Four-pixel & 86.6\% & 80.1\% & 0\% & \textbf{42.2\%} & 14.2\% & 0\% & 2.8\% & 13.9\% & 22.5\% \\
		& Blending & 87.4\% & 81.4\% & 0\% & \textbf{43.5\%} & 13.1\% & 0\% & 11.1\% & 14.2\% & 24.0\% \\
		\bottomrule
	\end{tabular}
	}
\end{table*}
\begin{table}[htbp]
    \centering
    \caption{Robustness of RAB on \textbf{DNNs} with and without test-time augmentation.}
    \label{tab:no-aug-result}
    \resizebox{\linewidth}{!}{
    \begin{tabular}{c c c c c c}
		\toprule
         & \multirow{2}{*}[-2pt]{\makecell{Backdoor \\ Pattern}} & \multicolumn{2}{c}{
		With Aug} & \multicolumn{2}{c}{
		Without Aug}\\
		\cmidrule(lr){3-4}\cmidrule(lr){5-6}
         & & RAB & RAB-cert & RAB & RAB-cert \\
        \midrule
		\multirow{3}{*}{MNIST} & One-pixel & 41.2\% & 23.5\% & 27.0\% & 12.7\% \\
		 & Four-pixel & 40.7\% & 24.1\% & 27.4\% & 12.8\% \\
		 & Blending & 39.6\% & 23.1\% & 26.2\% & 12.1\%  \\
        \midrule
		\multirow{3}{*}{CIFAR-10} & One-pixel & 42.9\% & 24.5\% & 26.9\% & 15.2\%  \\
		 & Four-pixel & 44.4\% & 25.7\% & 28.4\% & 16.4\%  \\
		 & Blending & 42.8\% & 24.1\% & 27.8\% & 15.8\%  \\
        \midrule
		\multirow{3}{*}{ImageNette} & One-pixel & 38.6\% & 15.9\% & 22.7\% & 5.1\%  \\
		 & Four-pixel & 38.4\% & 12.6\% & 22.6\% & 8.2\%  \\
		 & Blending & 29.9\% & 9.2\% & 18.7\% & 4.1\% \\
		\bottomrule
    \end{tabular}}
\end{table}

As our certification for DNN can be applied to any machine learning model, we now evaluate RAB on three tabular data - UCI Spambase dataset (Spambase)~\cite{Dua:2019}, and ``Adult'' and ``Agaricus\_lepiota'' (Mushroom) in the Penn Machine Learning Benchmarks (PMLB) datasets\cite{romano2021pmlb}. These datasets are all binary classification tasks. Spambase contains 4,601 data points, with 57-dimensional input; Adult contains 48,842 data points with 14-dimensional input; Mushroom contains 8,145 data points with 22-dimensional input. We train a support vector machine (SVM) with RBF kernel using the default setting in scikit-learn toolkit~\cite{scikit-learn}. As for the baselines where activation vectors are required, we use the output prediction vector as its representation, since there are no hidden activation layers in an SVM model.

The result of the SVM dataset is shown in Table~\ref{tab:result-svm}. NC is not evaluated because it relies on anomaly detection among different classes, and therefore cannot be applied on these binary classification tasks; Mixup is not evaluated because it cannot be applied in the SVM training algorithm. We can see that our approach still achieves good robustness both empirically and certifiably. Meanwhile, the baseline approaches cannot perform well as they are designed specifically for deep neural networks. In the SVM case where they use the output as the representation vector, the detection performance cannot be good.

\subsection{With \& Without Test-time Augmentation}
\label{sec:exp-no-aug}
Table~\ref{tab:no-aug-result} shows the comparison of empirical and certified robustness with and without test-time augmentation. We see that the test-time augmentation indeed helps with the model robustness both empirically and certifiably.

\subsection{Abstain Rate}

\label{sec:exp-abs-rate}
\begin{table}[htbp]
    \centering
    \caption{The abstain rate of the certification on \textbf{DNNs}.}
    \label{tab:abstain-rate}
    \begin{tabular}{c c c}
		\toprule
         & Backdoor Pattern & Abstain Rate\\
        \midrule
		\multirow{3}{*}{MNIST} & One-pixel & 3.32\% \\
		 & Four-pixel & 3.21\% \\
		 & Blending & 3.02\% \\
        \midrule
		\multirow{3}{*}{CIFAR-10} & One-pixel & 5.59\% \\
		 & Four-pixel & 6.00\% \\
		 & Blending & 5.29\% \\
        \midrule
		\multirow{3}{*}{ImageNette} & One-pixel & 3.89\% \\
		 & Four-pixel & 4.08\% \\
		 & Blending & 1.90\% \\
		\bottomrule
    \end{tabular}
\end{table}
Table~\ref{tab:abstain-rate} shows the abstain rate of RAB against attacks. We see that in general, the abstain rate is relatively low and will not be a serious concern in the pipeline. Note that if the denial-of-service attack is indeed a concern, we can perform a similar way as in \cite{cohen2019certified} to prove a certified radius in which we can certify our defense rather than abstaining the input.

\subsection{Multiple Runs}

\label{sec:exp-multi-run}
\begin{table}[htbp]
    \centering
    \caption{The mean and standard deviation of the RAB robustness on DNNs with 5 runs.}
    \label{tab:multi-run}
    \begin{tabular}{c c c c}
		\toprule
         & Backdoor Pattern & RAB & RAB-cert\\
        \midrule
		\multirow{3}{*}{MNIST} & One-pixel & 40.79$\pm$0.72\% & 23.36$\pm$0.52\% \\
		 & Four-pixel & 40.27$\pm$0.87\% & 24.37$\pm$0.49\%\\
		 & Blending & 40.72$\pm$0.65\% & 23.58$\pm$0.88\% \\
        \midrule
		\multirow{3}{*}{CIFAR-10} & One-pixel & 42.66$\pm$0.29\% & 24.35$\pm$0.31\% \\
		 & Four-pixel & 42.56$\pm$0.32\% & 25.25$\pm$0.37\% \\
		 & Blending & 42.89$\pm$0.21\% & 23.95$\pm$0.17\%\\
        \midrule
		\multirow{3}{*}{ImageNette} & One-pixel & 38.64$\pm$0.80\% & 15.45$\pm$0.94\% \\
		 & Four-pixel & 37.23$\pm$0.69\% & 12.45$\pm$0.82\% \\
		 & Blending & 28.74$\pm$1.15\% & 9.20$\pm$1.40\%\\
		\bottomrule
    \end{tabular}
\end{table}
To see the stability of RAB, we run our algorithm 5 times and report the mean and standard deviation in Table~\ref{tab:multi-run}. We can see that the standard deviation is relatively small, indicating that our algorithm is stable.

\subsection{Adversarial Atacks on RAB Models}

\label{sec:exp-adv-atk}
\begin{figure}
    \centering
    \includegraphics[width=0.6\linewidth]{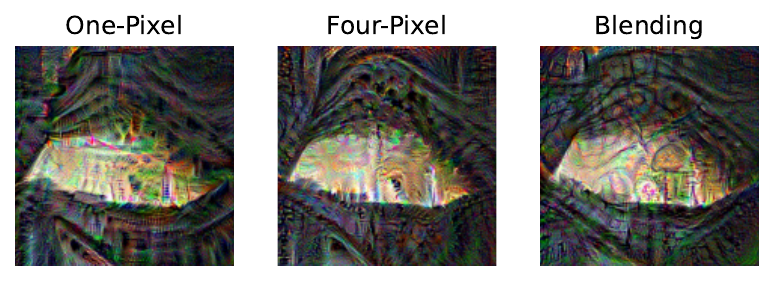}
    \caption{Adversarial examples against backdoored RAB model on the ImageNette dataset.}
    \label{fig:adv-atk}
\end{figure}
In \cite{sun2020poisoned}, the authors show that if they smooth a backdoored model, the robustified version will still be broken (i.e. with obvious adversarial pattern). We replicate the experiments on the RAB model by performing adversarial attacks against RAB model. In order to do attack, we use the PGD attack where the gradient is calculated by aggregating the gradient on all the trained models. In Figure~\ref{fig:adv-atk}, We show the results on ImageNette with $\varepsilon=60$ so that the pattern is the most clear. We observe that the adversarial examples look similar with those of unsmoothed model in \cite{sun2020poisoned}. Thus, the RAB pipeline is different with the smoothing process; rather, it is similar with an unsmoothed vanilla model.
\end{appendices}

\end{document}